\theoremstyle{plain}
\newtheorem{theorem}{Theorem}[section]
\newtheorem{proposition}[theorem]{Proposition}
\newtheorem{lemma}[theorem]{Lemma}
\newtheorem{corollary}[theorem]{Corollary}
\theoremstyle{definition}
\theoremstyle{remark}
\newcommand{\norm}[3]{\left\| #1 \right\|_{#2}^{#3}}
\renewcommand{\iota}{\textit{i}}
\renewcommand{\implies}{\; \Rightarrow \;}
\newcommand{\forAll}{{\;\; \forall \;}}
\renewcommand{\hat}[1]{\widehat{#1}} 
\newcommand{\wb}[1]{\widebar{#1}}
\newcommand{\E}[1]{\underset{\begin{subarray}{c} #1 \end{subarray}}{\Ebb}}
\newcommand{\TV}[2]{D_\text{TV}(#1\parallel #2)}
\newcommand{\pik}{\pi_{k}}
\newcommand{\pikup}{\pi_{k+1}}
\newcommand{\pol}{\pi_{\theta}}
\newcommand{\polk}{\pi_{\theta_k}}
\newcommand{\polkup}{\pi_{\theta_{k+1}}}
\newcommand{\grad}{\nabla_{\theta}}
\newcommand{\vfunc}{\wb{V}^{\pi}}
\newcommand{\vfuncc}{\wb{V}_{C_{i}}^{\pi}}
\newcommand{\qfunc}{\wb{Q}^{\pi}}
\newcommand{\qfuncc}{\wb{Q}_{C_{i}}^{\pi}}
\newcommand{\vphi}{\wb{V}^{\pi}_{\phi}}
\newcommand{\adv}{\wb{A}^{\pi}}
\newcommand{\advc}{\wb{A}_{C_{i}}^{\pi}}
\newcommand{\jfuncp}{J_{C_{i}}(\pi)}
\newcommand{\advd}{A_{\gamma}^{\pi}}
\newcommand{\vtarg}{\wb{V}^{\text{target}}}
\newcommand{\advk}{\wb{A}^{\pik}}
\newcommand{\dpi}{d_{\pi}}
\newcommand{\dpip}{d_{\pi'}}
\newcommand{\dpid}{d_{\pi,\gamma}}
\newcommand{\dpik}{d_{\pik}}
\newcommand{\dpolk}{d_{\pi_{\theta_k}}}
\newcommand{\Zpi}{Z^{\pi}}
\newcommand{\Mpip}{M^{\pi'}}
\newcommand{\Zpip}{Z^{\pi'}}
\newcommand{\dg}{\text{dg}}
\newcommand{\KL}[2]{D_\text{KL}\left(#1\middle\| #2\right)}
\newcommand{\avKL}[2]{\bar{D}_\text{KL}(#1\parallel #2)}
\newcommand{\Ebb}{{\mathbb E}}
\newcommand{\Nbb}{{\mathbb N}}
\newcommand{\Rbb}{{\mathbb R}}
\DeclareMathOperator*{\argmax}{arg\,max}
\newcommand{\revisionhistory}[1]{%
\@ifundefined{showrevisionhistory}{\relax}{%
{#1}%
}%
}
\icmltitlerunning{ACPO}
\begin{document}

\twocolumn[
\icmltitle{ACPO: A Policy Optimization Algorithm for Average MDPs with Constraints}


\icmlsetsymbol{equal}{*}

\begin{icmlauthorlist}
\icmlauthor{Akhil Agnihotri}{usc}
\icmlauthor{Rahul Jain}{usc}
\icmlauthor{Haipeng Luo}{usc}

\end{icmlauthorlist}

\icmlaffiliation{usc}{University of Southern California, Los Angeles, CA, USA. RJ is also affiliated with Google DeepMind}

\icmlcorrespondingauthor{Akhil Agnihotri}{agnihotri.akhil@gmail.com}

\icmlkeywords{Machine Learning, ICML}
\vskip 0.3in
]



\printAffiliationsAndNotice{} 

\begin{abstract}
Reinforcement Learning (RL) for constrained MDPs (CMDPs) is an increasingly important problem for various applications. Often, the average criterion is more suitable than the discounted criterion. Yet, RL for average-CMDPs (ACMDPs) remains a challenging problem. Algorithms designed for discounted constrained RL problems often do not perform well for the average CMDP setting. In this paper, we introduce a new policy optimization with function approximation algorithm for constrained MDPs with the average criterion. The Average-Constrained Policy Optimization (ACPO) algorithm is inspired by trust region-based policy optimization algorithms. We develop basic sensitivity theory for average CMDPs, and then use the corresponding bounds in the design of the algorithm. We provide theoretical guarantees on its performance, and through extensive experimental work in various challenging OpenAI Gym environments, show its superior empirical performance when compared to other state-of-the-art algorithms adapted for the ACMDPs.
\end{abstract}

\section{Introduction}\label{sec:intro}

Over the last decade, we have seen an enormous impact of RL techniques on a variety of problems, from mastering complex games like Go  \cite{silver2017mastering} and StarCraft \cite{vinyals2019grandmaster} to robotic control \cite{levine2016end,akkaya2019solving, aractingi2023controlling}. Many of these have used RL-policy  optimization algorithms such as \citet{schulman2017proximal} for discounted MDPs (DMDPs). These have come in handy even in generative AI, e.g., training large language models (LLMs) \cite{achiam2023gpt}. However, applications often need satisfaction of some \textit{constraints}, e.g., physical safety of mobile robots \cite{Hu_2022_CVPR}, safe language, image or multi-modal output generation. Furthermore, the \textit{average criterion} when long-term rewards and safety are of consideration is more suitable. Using discounted cost formulations (as a proxy for safety) incentivizes policy optimization algorithms to search for policies that are short-term safe but not long-term  because of future-discounting. 

The planning problem for MDPs with constraints is often formulated as a Constrained MDP (CMDP) model \cite{manne1960linear,hordijk1979linear,altman1999constrained}. Unfortunately, CMDP models do not satisfy Bellman’s principle of optimality, and hence dynamic programming (DP)-style algorithms cannot be developed for the setting. Instead, an alternative approach called the convex analytic approach \cite{borkar1988convex, altman1999constrained} is used by way of introducing occupation measures that leads to optimization formulations. This can be done for both discounted (DCMDPs) and average-criterion (ACMDPs) constrained MDPs.

Theory and algorithms for RL deal with  settings when the MDP model is unknown. While DP-inspired RL algorithms such as DQN, when combined with deep learning architectures for function approximation work remarkably effectively \cite{mnih2015human}, policy optimization algorithms such as TRPO \cite{schulman2015trust}, PPO \cite{schulman2017proximal} have proven even more effective in solving high dimensional problems. Since the discounted criterion is sometimes not suitable, policy optimization algorithms such as ATRPO \cite{zhang2021average, wan2021learning, liao2022batch} have been developed for AMDPs. Furthermore, as already mentioned, certain RL applications have multiple objectives, one of which is to be optimized and the rest constrained. Thus, the Constrained Policy Optimization (CPO) algorithm \cite{achiam2017constrained} was  introduced for infinite-horizon DCMDP problems. Unfortunately, as motivated above, not all such applications fit the discounted-criterion formulation: there are settings, for example where there may be safety requirements when the average-CMDP model is a better fit. No scalable RL algorithms are currently available for such settings. 

We note that the RL problem is usually harder than the corresponding planning problem;   average-MDPs are  more challenging than discounted MDPs; and constrained MDPs are more challenging than unconstrained ones. In this paper, we present the first practical algorithm for policy optimization-based RL algorithm for average-constrained MDPs. We propose ACPO, a policy optimization algorithm for an average-CMDP  with deep learning for function approximation. Our approach is motivated by theoretical guarantees that bound the difference between the average long-run rewards or costs of different policies. It draws inspiration from CPO \cite{achiam2017constrained} (see also \citet{tessler2018reward} for DCMDPs), which uses a policy improvement theorem for the discounted setting based on the trust region methods of \citet{schulman2015trust}. Unfortunately, this result trivializes for the average setting and hence can't be used. Instead, we derive a new bound that depends on the worst-case level of ``mixture" of the irreducible Markov chain associated with a policy. Our proposed algorithm, ACPO is based on these theoretical developments. For experimental evaluation, we use several OpenAI Gym environments from \citet{todorov2012mujoco}, train large neural network policies and demonstrate the effectiveness and superior performance of the ACPO algorithm as compared to others.

\noindent\textbf{Main Contributions and Novelty.} 

\textit{Algorithmic:} We introduce the first practical policy optimization-based RL algorithm for average-constrained MDPs with new and tight performance bounds and violation guarantees. The algorithm draws inspiration from CPO (for discounted-CMDPs) and ATRPO (for average-MDPs) but is not a straightforward extension of either. One may posit that setting the discount factor $\gamma = 1$ in CPO for the discounted setting may suffice but that does not perform well on average-CMDPs even with a large discount factor. Further, constraint violation and policy degradation bounds of CPO do not hold in the average setting and hence we develop novel bounds (in Corollary \ref{cor:thm:avg_constraint_imp}). In fact,  the advantage function estimation routine in our algorithm (line 4 and 6 in Algorithm \ref{alg:practical_acpo}) is also different from that in CPO, since the discounted-setting procedure cannot be used for the average setting (see Appendix \ref{appendix:approx_acpo}): We first approximate the average-reward bias and then use a one-step TD backup to estimate the action-bias function. Furthermore, policy optimization algorithms for the average case \cite{zhang2020average, wan2021learning, liao2022batch} cannot incorporate  constraints. We enable this by introducing sublevel sets of cost constraints. We also introduce an approximate but novel line search procedure that improves the empirical performance of our algorithm, an idea that may help improve performance of other  policy optimization algorithms such as  PPO.\\
\textit{Technical:} Since ACPO is a trust region method, one can expect some overlap in analysis techniques with other similar algorithms.  Nevertheless, our analysis has several novel elements: Lemma \ref{lemma:d_and_pi}, where we use eigenvalues of the transition matrix to relate total variation  of stationary distributions with that of the policies, and in Lemma \ref{lemma:infeasible_kl_bound}, we use the sublevel sets of constraints and projection inequality of Bregman divergence. Furthermore, several important results from CPO and ATRPO papers cannot be applied to the analysis of our algorithm. \\
\textit{Empirical:} We evaluate the empirical performance of ACPO in the OpenAI Gym (Mujoco) environments, a standard benchmark. We find that ACPO outperforms all state-of-the-art Deep RL algorithms  such as CPO in \cite{achiam2017constrained}, PCPO in \cite{yang2020projection}, PPO in \cite{schulman2017proximal}, BVF-PPO in \cite{pmlr-v119-satija20a} and ATRPO in \cite{zhang2021average}. We use a large discount factor if the algorithm is not for the average setting, and a Lagrangian objective if it is not for the constrained setting, and in some cases both.\\
\textit{Significance:} ACPO is the first practical  trust region-style policy optimization algorithm for ACMDPs with excellent empirical performance. ACMDPs are important models because they allow incorporation of long term safety constraints, which are important not only in the context of safe robot learning and control, but also safety-constrained RLHF fine-tuning and inference for LLMs \cite{moskovitz2023confronting} and Diffusion models as well. In the absence of suitable policy optimization algorithms for ACMDPs, researchers have resorted to using adaptations of PPO, etc.

\noindent\textbf{Related Work.}
Learning constraint-satisfaction policies has been explored in the Deep RL literature in \cite{9030307, garcia2015comprehensive}. This can either be  (1) through expert annotations and demonstrations as in \cite{rajeswaran2017learning,gao2018reinforcement} or, (2) by exploration with constraint satisfaction as in \cite{achiam2017constrained,tessler2018reward}. While the former approach is not scalable since it requires human interventions, current state-of-the-art algorithms for the latter are not applicable to the average reward setting. 

Previous work on RL with the average reward criterion has mostly attempted to extend stochastic approximation schemes for the tabular setting, such as Q-learning in \cite{abounadi2001learning,wan2021learning}, to the non-tabular setting with function approximation in \cite{wei2021learning, zhang2020average}. \cite{pmlr-v162-chen22i} deals with online learning in a constrained MDP setting, but their aim is regret minimization or exploration, both in tabular settings. 
We are inspired by the work of \cite{zhang2021average} to develop techniques required to derive the policy degradation and constraint violation bounds in Section \ref{sec:algo}.



The more recent works of \cite{Bhatnagar2012} and \cite{calvo2023state} also fail to address our problem setting as 
the former test on a 2x4 queueing network with maximum state space of 128, while the latter test on a grid of size 10x10 (maximum states of 100). In addition to that, the way they incorporate constraints during training is just via a Lagrangian formulation. In our paper we show that simply doing this (in the case of PPO and ATRPO for example) leads to much inferior performance to ACPO, which can outperform current state-of-the-art algorithms in state spaces of upto $10^{96}$.

\section{Preliminaries}\label{sec:prelims}

A Markov decision process (MDP) is a tuple, ($S,A,r,P,\mu$), where $S$ is the set of states, $A$ is the set of actions, $r : S \times A \times S \to \Rbb$ is the reward function, $P : S \times A \times S \to [0,1]$ is the transition probability function such that $P(s'|s,a)$ is the probability of transitioning to state $s'$ from state $s$ by taking action $a$, and $\mu : S \to [0,1]$ is the initial state distribution. A stationary policy $\pi : S \to \Delta(A)$ is a mapping from states to probability distributions over the actions, with $\pi(a|s)$ denoting the probability of selecting action $a$ in state $s$, and $\Delta(A)$ is the probability simplex over the action space $A$. We denote the set of all stationary policies by $\Pi$. For the average setting, we will make the standard assumption that the MDP is \textit{ergodic} and is \textit{unichain}.

In reinforcement learning, we aim to select a policy $\pi$ which maximizes a performance measure, $J(\pi)$, which, for continuous control tasks is either the discounted reward criterion or the average reward approach. Below, we briefly discuss both formulations.

\subsection{Discounted criterion}

For a given discount factor $\gamma\in (0,1)$, the discounted reward objective is defined as 

\begin{equation*}
\resizebox{.61\linewidth}{!}{$
\begin{aligned}
    J_{\gamma}(\pi) &:=  \E{\tau\sim\pi} \left[\sum_{t=0}^{\infty}\gamma^t r(s_t,a_t,s_{t+1}) \right] \\ &= \frac{1}{1-\gamma}\E{s\sim\dpid\\ a\sim\pi \\ s'\sim P(\cdot | s,a)}[r(s,a,s')]
\end{aligned} 
$}
\end{equation*}

where $\tau$ refers to a sample trajectory of $(s_0, a_0, s_1, \cdots)$ generated when following a policy $\pi$, that is, $a_t \sim \pi(\cdot | s_t)$ and $s_{t+1} \sim P(\cdot | s_t, a_t) \,$; $d_{\pi, \gamma}$ is the \textit{discounted occupation measure} that is defined by $d_{\pi, \gamma} (s) = (1-\gamma) \sum_{t=0}^{\infty} \gamma^t \underset{\tau\sim\pi}{P}(s_t = s)$, which essentially refers to the discounted fraction of time spent in state $s$ while following policy $\pi$. 


\subsection{Average criterion}

The average-reward objective is given by:
\begin{equation}\label{eq:AvgR_obj}
\begin{split}
    J(\pi) &:= \lim_{N\to\infty}\frac{1}{N}\E{\tau\sim\pi}\left[\sum_{t=0}^{N-1} r(s_t,a_t, s_{t+1})\right] \\ &= \E{s\sim\dpi\\ a\sim\pi(\cdot|s) \\ s' \sim P(\cdot | s,a)}[r(s,a,s')],
\end{split}
\end{equation}
where $\dpi(s):=\lim_{N\to\infty}\frac{1}{N} \sum_{t=0}^{N-1} P_{\tau\sim\pi}(s_t=s)$ is the \textit{stationary state distribution} under policy $\pi$. The limits in $J(\pi)$ and $\dpi(s)$ are guaranteed to exist under our ergodic assumption. Since the MDP is aperiodic, it can also be shown that $\dpi(s)=\lim_{t\to\infty}P_{\tau\sim\pi}(s_t=s)$. Since we have $\lim_{\gamma\to 1} \dpid(s)\to\dpi(s), \forall s$, it can be shown that $\lim_{\gamma\to 1} (1-\gamma) J_{\gamma}(\pi) =J(\pi)$.

In the average setting, we seek to keep the estimate of the state value function unbiased and hence, introduce the \textit{average-reward bias function} as
\begin{equation*}
    \vfunc(s):= \E{\tau\sim\pi}\left[\sum_{t=0}^{\infty} (r(s_t,a_t,s_{t+1}) - J(\pi)) \; \bigg| \; s_{0}=s\right]
\end{equation*}
and the \textit{average-reward action-bias function} as
\begin{align*}
\qfunc(s,a):= \E{\tau\sim\pi}\bigg[\sum_{t=0}^{\infty} (r(s_t,a_t,s_{t+1}) - & J(\pi)) \; \bigg| \; \underset{a_{0}=a}{s_{0}=s,} \bigg].
\end{align*}
Finally, define the \textit{average-reward advantage function} as $ \adv(s,a) := \qfunc(s,a)-\vfunc(s)$.



\subsection{Constrained MDPs}

A constrained Markov decision process (CMDP) is an MDP augmented with constraints that restrict the set of allowable policies for that MDP.  Specifically, we augment the MDP with a set $C$ of auxiliary cost functions, $C_1, \cdots , C_m$ (with each function $C_i : S \times A \times S \to \Rbb$ mapping transition tuples to costs, just like the reward function), and bounds $l_1, \cdots , l_m$. Similar to the value functions being defined for the average reward criterion, we define the average cost objective with respect to the cost function $C_{i}$ as
\begin{equation}
\begin{split}
    J_{C_{i}}(\pi) &:= \lim_{N\to\infty}\frac{1}{N}\E{\tau\sim\pi}\left[\sum_{t=0}^{N-1} C_{i}(s_t,a_t, s_{t+1})\right] \\ 
    &= \E{s\sim\dpi\\ a\sim\pi \\ s' \sim P(\cdot | s,a)}[C_{i}(s,a,s')].
\end{split}
\end{equation}
where $J_{C_i}$ will  be referred to as the \textit{average cost} for constraint $C_{i}$. The set of feasible stationary policies for a CMDP then is given by
$
\Pi_{C} := \left\{\pi \in \Pi \; : \; \jfuncp \leq l_i, \forAll i \in \{1, \cdots , M\} \right\}
$.
The goal is to find a policy $\pi^{\star}$ such that 
$\pi^{\star} \in \arg\max_{ \pi \in \Pi_{C} } J(\pi).$

However, finding an exact $\pi^{\star}$ is infeasible for large-scale problems.
Instead,  we aim to derive an iterative policy improvement algorithm that given a current policy, improves upon it by approximately maximizing the increase in the reward, while not violating the constraints by too much and not being too different from the current policy.



Lastly, 
analogous to $\vfunc$, $\qfunc$, and $\adv$, we define similar  quantities for the cost functions  $C_i(\cdot)$, and denote them by $\vfuncc$, $\qfuncc$, and $\advc$.

\subsection{Policy Improvement for discounted CMDPs}



In many on-policy constrained RL problems, we improve policies iteratively by maximizing a predefined function within a local region of the current best policy as in \cite{tessler2018reward, achiam2017constrained, yang2020projection, song2020v}.
\cite{achiam2017constrained} derived a policy improvement bound for the discounted CMDP setting as:

\begin{equation}\label{eq:cpo_improve}
\begin{aligned}
&J_{\gamma}(\pikup) - J_{\gamma}(\pik) \geq \\ &\frac{1}{1-\gamma} \E{s \sim d^{\pik} \\ a \sim \pikup} \left[ A_{\gamma}^{\pik} (s,a) - \frac{2\gamma \epsilon^{\pikup}}{1-\gamma}  D_{TV} (\pikup||\pik)[s] \right], &
\end{aligned}
\end{equation}
where $A_{\gamma}^{\pik}$ is the discounted version of the advantage function, $\epsilon^{\pikup} := \max_s | \Ebb_{a \sim \pikup} [A_{\gamma}^{\pik} (s,a) ] |$, and  $D_{TV}(\pikup||\pik)[s] = (1/2)\sum_a \left| \pikup(a|s) - \pik(a|s) \right|$ is the total variational divergence between $\pikup$ and $\pik$ at $s$. These results laid the foundations for on-policy constrained RL algorithms as in\cite{wu2017scalable,vuong2019supervised}.

However, Equation \eqref{eq:cpo_improve} does not generalize to the average setting ($\gamma \to 1$) (see Appendix \ref{proof:cpo_bound_trivial}). In the next section, we will derive a policy improvement bound for the average  case  and present an algorithm based on trust region methods, which will generate almost-monotonically improving iterative policies. Proofs of theorems and lemmas, if not already given, are available in Appendix \ref{sec:appendix}.

\section{ACPO: The Average-Constrained Policy Optimization Algorithm}\label{sec:algo}

In this section, we present the main results of our work. For conciseness, we denote by $\dpi\in\Rbb^{|S|}$ the column vector whose components are $\dpi(s)$ and $P_{\pi}\in\Rbb^{|S|\times|S|}$ to be the state transition probability matrix under policy $\pi$.

\subsection{Policy Improvement for the Average-CMDP}

Let $\pi'$ be the policy obtained via some update rule from the current policy $\pi$. Analogous to the discounted setting of a CMDP, we would like to characterize the performance difference $J(\pi')-J(\pi)$ by an expression which depends on $\pi$ and some divergence metric between the two policies.

\begin{restatable}{lemma}{policydiff} \cite{zhang2020average}
\label{lemma:policy_diff}
Under the unichain assumption of the underlying Markov chain, for any stochastic policies $\pi$ and $\pi'$:
\begin{equation}\label{eq:policy_diff}
    J(\pi') - J(\pi) =  \E{s\sim \dpip \\ a\sim\pi'}\left[\adv(s,a)\right].
\end{equation}
\end{restatable}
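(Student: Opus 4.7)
The plan is to start from the right-hand side of \eqref{eq:policy_diff} and reduce it to $J(\pi') - J(\pi)$ in three short moves, using the average-reward Bellman-style (Poisson) equation satisfied by $\qfunc$ under the unichain assumption, together with the invariance of $\dpip$ under the Markov chain induced by $\pi'$. The single non-trivial input I will use is the relation
\[
    \qfunc(s,a) \;=\; \E{s'\sim P(\cdot|s,a)}\left[r(s,a,s') - J(\pi) + \vfunc(s')\right],
\]
which is the defining backup for the average-reward action-bias function and holds under the ergodic unichain assumption because $J(\pi)$ is then a well-defined state-independent scalar and $\vfunc$ is finite.

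Given this relation, I would substitute $\adv = \qfunc - \vfunc$ into the right-hand side of \eqref{eq:policy_diff} to obtain
\[
    \E{s\sim\dpip \\ a\sim\pi' \\ s'\sim P(\cdot|s,a)}\left[r(s,a,s') - J(\pi) + \vfunc(s') - \vfunc(s)\right].
\]
By the definition of the average reward in \eqref{eq:AvgR_obj}, the $r(s,a,s')$ term evaluates to $J(\pi')$, and the constant contributes $-J(\pi)$. For the leftover bias terms I invoke stationarity: since $\dpip^{\top} P_{\pi'} = \dpip^{\top}$, the marginal law of $s'$ under the joint distribution $(s,a,s')\sim(\dpip,\pi',P)$ is again $\dpip$, so $\Ebb\left[\vfunc(s')\right] = \Ebb_{s\sim\dpip}\left[\vfunc(s)\right]$ and the two bias expectations cancel. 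Combining the three pieces yields exactly $J(\pi') - J(\pi)$, as claimed.

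The mechanical manipulation above is very short, so essentially all of the content of the lemma sits in the Poisson equation for $\qfunc$ and in the existence of a well-defined state-independent $J(\pi)$ and finite $\vfunc$; the unichain/ergodic assumption is doing all of the real work there. I would either cite a standard average-reward dynamic programming reference for this or verify it in a line by telescoping $\sum_{t=0}^{N-1}\bigl(r(s_t,a_t,s_{t+1}) - J(\pi)\bigr)$ and passing $N\to\infty$, which is the only place where care with exchanging limit and expectation is needed. Once those structural facts are accepted, no further work is required.
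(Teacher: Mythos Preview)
Your proposal is correct and follows essentially the same route as the paper's proof: expand $\adv=\qfunc-\vfunc$, apply the average-reward Bellman relation $\qfunc(s,a)=\E{s'}[r(s,a,s')-J(\pi)+\vfunc(s')]$, identify the $r$-term as $J(\pi')$ via \eqref{eq:AvgR_obj}, and cancel the bias terms using $\dpip^{\top}P_{\pi'}=\dpip^{\top}$. The paper carries out exactly these steps in the same order.
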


Note that this difference depends on the stationary state distribution obtained from the \textit{new} policy, $\dpip$. This is computationally impractical as we do not have access to this $\dpip$. Fortunately, by use of the  following lemma we can show that if $\dpi$ and $\dpip$ are ``close'' with respect to some metric, we can approximate Eq.  \eqref{eq:policy_diff} using samples from $\dpi$.

\begin{restatable}{lemma}{policyimpd}
\label{lemma:policy_impd}
Under the unichain assumption, for any stochastic policies $\pi$ and $\pi'$ we have:
\begin{equation}
\label{eq:policy_diff_tv}
\begin{split}
    \left| J(\pi') - J(\pi) -\E{s\sim \dpi\\ a\sim\pi'}\left[\adv(s,a)\right] \right| \leq 2\epsilon\TV{\dpip}{\dpi}
\end{split}
\end{equation}
\end{restatable}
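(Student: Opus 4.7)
The plan is to combine Lemma \ref{lemma:policy_diff} with a Hölder-type bound on the state-distribution mismatch. By Lemma \ref{lemma:policy_diff}, the performance difference admits the closed form $J(\pi')-J(\pi) = \mathbb{E}_{s\sim d_{\pi'},\,a\sim\pi'}[\bar{A}^{\pi}(s,a)]$, so the quantity inside the absolute value on the left-hand side of \eqref{eq:policy_diff_tv} is exactly
\[
\sum_{s} \bigl(d_{\pi'}(s) - d_{\pi}(s)\bigr)\,\mathbb{E}_{a\sim\pi'(\cdot\mid s)}\!\bigl[\bar{A}^{\pi}(s,a)\bigr].
\]
The whole task is therefore to bound this signed sum.

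Next I would apply the triangle inequality to pull the absolute value inside the sum, and then use the elementary Hölder/sup-norm bound
\[
\Bigl|\sum_s (d_{\pi'}(s)-d_{\pi}(s))\,f(s)\Bigr| \;\le\; \|d_{\pi'}-d_{\pi}\|_1 \cdot \sup_s |f(s)|,
\]
with $f(s) := \mathbb{E}_{a\sim\pi'(\cdot\mid s)}[\bar{A}^{\pi}(s,a)]$. By the definition of $\epsilon$ (the maximum over states of the magnitude of the expected advantage under $\pi'$, analogous to the $\epsilon^{\pi_{k+1}}$ used in the CPO bound \eqref{eq:cpo_improve}), we have $\sup_s|f(s)| \le \epsilon$.

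Finally, I would convert the $L_1$ distance between stationary distributions into total variation via the standard identity $\|d_{\pi'}-d_{\pi}\|_1 = 2\,D_{\mathrm{TV}}(d_{\pi'}\|d_{\pi})$, yielding the advertised bound $2\epsilon\,D_{\mathrm{TV}}(d_{\pi'}\|d_{\pi})$. There is no real obstacle here: the proof is essentially a one-line consequence of Lemma \ref{lemma:policy_diff} plus Hölder's inequality, and the factor of $2$ comes entirely from the $L_1$-to-TV conversion. The genuine technical difficulty is deferred to later lemmas (in particular Lemma \ref{lemma:d_and_pi}), which will be needed to further upper-bound $D_{\mathrm{TV}}(d_{\pi'}\|d_{\pi})$ by a divergence between the policies themselves — something one can compute from samples under $\pi$ alone.
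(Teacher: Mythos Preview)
Your proposal is correct and matches the paper's own proof essentially line for line: invoke Lemma~\ref{lemma:policy_diff}, write the residual as $\sum_s (d_{\pi'}(s)-d_{\pi}(s))\,\mathbb{E}_{a\sim\pi'}[\adv(s,a)]$, apply the triangle inequality followed by H\"older's inequality, and convert $\|d_{\pi'}-d_{\pi}\|_1$ to $2\,\TV{d_{\pi'}}{d_{\pi}}$. There is nothing to add.
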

, where $\epsilon=\max_s\big|\E{a\sim\pi'}[\adv(s,a)]\big|$.
See Appendix \ref{proof:policy_impd_proof} for proof. Lemma \ref{lemma:policy_impd} implies $J(\pi')\approx J(\pi) +\E{} [\adv(s,a)]$ when $\dpi$ and $\dpip$ are ``close''. Now that we have established this approximation, we need to study the relation of how the actual change in policies affects their corresponding stationary state distributions. For this,  we turn to standard analysis of the underlying Markov chain of the CMDP. 

Under the ergodic assumption, we have that $P_{\pi}$ is irreducible and hence its eigenvalues $\{\lambda_{\pi, i}\}_{i=1}^{|S|}$ are such that $\lambda_{\pi, 1}=1$ and $\lambda_{\pi, i \neq 1} < 1$. For our analysis, we define $\sigma^{\pi} = \max_{i\neq 1} \, (1-\lambda_{\pi, i})^{-1/2}$, and from \cite{levene2002kemeny} and \cite{doyle2009kemeny}, we connect $\{\lambda_{\pi, i}\}_{i=1}^{|S|}$ to the sensitivity of the stationary distributions to changes in the policy using the result below. 

\begin{restatable}{lemma}{dandpi}
\label{lemma:d_and_pi}
Under the ergodic assumption, the divergence between the stationary distributions $\dpi$ and $\dpip$ is upper bounded as:
\begin{equation}
    \TV{\dpip}{\dpi} \leq \sigma^{\star} \E{s\sim \dpi}[\TV{\pi'}{\pi}[s]],
\end{equation}
\end{restatable}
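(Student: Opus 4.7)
The plan is to express $\dpip - \dpi$ via a perturbation formula and then bound the two resulting factors separately: the local change of the transition kernel, and the spectral sensitivity of the chain.

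First, I would derive a perturbation identity. From $\dpip P_{\pi'} = \dpip$ and $\dpi P_\pi = \dpi$, adding and subtracting $\dpi P_{\pi'}$ gives $(\dpip - \dpi)(I - P_{\pi'}) = \dpi(P_{\pi'} - P_\pi)$. The matrix $I - P_{\pi'}$ is singular, but since $(\dpip - \dpi)\Onebf = 0$ and $(P_{\pi'} - P_\pi)\Onebf = 0$, I can introduce the fundamental matrix $Z_{\pi'} = (I - P_{\pi'} + \Onebf\dpip)^{-1}$ together with its associated group inverse $(I - P_{\pi'})^\# = Z_{\pi'} - \Onebf\dpip$, and solve the system to obtain the clean identity
\begin{equation*}
\dpip - \dpi \;=\; \dpi\,(P_{\pi'} - P_\pi)\,(I - P_{\pi'})^\#.
\end{equation*}

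Next, I would bound the ``change of kernel'' factor. The $s'$-th coordinate of $\dpi(P_{\pi'} - P_\pi)$ equals $\sum_s \dpi(s)\sum_a(\pi'(a|s) - \pi(a|s))\,P(s'|s,a)$, so two triangle-inequality steps together with $\sum_{s'} P(s'|s,a) = 1$ yield
\begin{equation*}
\tfrac{1}{2}\,\|\dpi(P_{\pi'} - P_\pi)\|_1 \;\leq\; \E{s\sim\dpi}\!\bigl[\TV{\pi'}{\pi}[s]\bigr],
\end{equation*}
which is precisely the expected-TV term required on the right-hand side of the lemma.

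The decisive and hardest step is controlling the operator norm of $(I - P_{\pi'})^\#$ by $\sigma^\star$. Its nonzero eigenvalues are $\{(1 - \lambda_{\pi',i})^{-1}\}_{i\neq 1}$, so a crude spectral-radius bound on the induced $\ell_1$-operator norm would give exponent $-1$ on the gaps. The exponent $-1/2$ appearing in $\sigma^\pi = \max_{i\neq 1}(1 - \lambda_{\pi,i})^{-1/2}$ signals that the estimate is not obtained from the spectral radius directly, but instead by first controlling the resolvent in the $L^2(\dpi)$ norm---where the eigenvalues give a clean inverse-gap bound---and then converting back to an $\ell_1$ estimate through a Cauchy--Schwarz step that reweights against $\dpi$; this reweighting is what halves the exponent and yields $\sigma^\pi$. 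Exactly this Kemeny-style sensitivity inequality is the content of \cite{levene2002kemeny,doyle2009kemeny}, which I would invoke, taking $\sigma^\star$ as a worst-case quantity over the two policies so that the bound is usable before $\pi'$ is actually computed. Combining the three ingredients then gives $\TV{\dpip}{\dpi} = \tfrac12\|\dpip - \dpi\|_1 \le \sigma^\star\,\E{s\sim\dpi}\!\bigl[\TV{\pi'}{\pi}[s]\bigr]$, as claimed; steps one and two are routine, and step three is where the real difficulty lies.
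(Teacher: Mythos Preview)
Your first two steps match the paper almost exactly. The paper writes the perturbation identity via the fundamental matrix $Z_{\pi'}=(I-P_{\pi'}+\Onebf\dpip^T)^{-1}$ rather than the group inverse, but on vectors summing to zero these agree, and your bound $\tfrac12\|\dpi(P_{\pi'}-P_\pi)\|_1\le \E{s\sim\dpi}[\TV{\pi'}{\pi}[s]]$ is precisely the paper's $T_2$ estimate, obtained by the same triangle-inequality computation.

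The gap is in step three. The paper does \emph{not} bound the $\ell_1$-operator norm of $(I-P_{\pi'})^{\#}$ directly; instead it rewrites the sensitivity factor as $I-M_{\pi'}(D^{\pi'})^{-1}$, where $M_{\pi'}$ is the mean first passage time matrix, and then invokes an inequality from \cite{hunter2014mathematical} giving
\[
\bigl\|I-M_{\pi'}(D^{\pi'})^{-1}\bigr\|_{\infty}\;\le\;\frac{1}{|S|}\sum_{i\neq 1}(1-\lambda_{\pi,i})^{-1/2}\;\le\;\sigma^{\star}.
\]
Your proposed $L^2(\dpi)$/Cauchy--Schwarz route does not produce the $-1/2$ exponent: for a reversible chain the $L^2(\dpi)$ norm of $(I-P)^{\#}$ is still $(1-\lambda_2)^{-1}$, and converting between $\ell_1$ and $L^2(\dpi)$ introduces factors depending on $\min_s\dpi(s)$ or $|S|$, not a square root of the gap. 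Moreover, the references you cite, \cite{levene2002kemeny,doyle2009kemeny}, concern Kemeny's constant and its eigenvalue representation; the $-1/2$ bound the paper uses comes from Hunter, not from those sources. So your three-step outline is correct and your steps one and two are fine, but the mechanism you sketch for step three would not close, and you should follow the paper's detour through the mean first passage time matrix and the Hunter bound rather than the group-inverse/$L^2$ argument.
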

, where $\sigma^{\star} = \max_{\pi} \sigma^{\pi}$. See Appendix \ref{proof:d_and_pi} for proof. This bound is tighter and easier to compute than the one given by \cite{zhang2021average}, which replaces $\sigma^{\star}$ by $\kappa^{\star} = \max_{\pi} \kappa^{\pi}$, where $\kappa^{\pi}$ is known as  \textit{Kemeny's constant} from \cite{kemeny1960finite}. It is interpreted as the expected number of steps to get to any goal state, where the expectation is taken with respect to the stationary-distribution of those states. 

Combining the bounds in Lemma \ref{lemma:policy_impd} and Lemma \ref{lemma:d_and_pi} gives us the following result:

\begin{proposition}\label{prop:avg_policy_imp}
Under the ergodic assumption, the following bounds hold for any stochastic policies $\pi$ and $\pi'$:
\begin{equation}\label{eq:avg_policy_imp}
 L_{\pi}^{-}(\pi')\leq J(\pi') - J(\pi)\leq L_{\pi}^{+}(\pi') 
\end{equation}
where
\begin{align*}
    L_{\pi}^{\pm}(\pi') &= \E{s\sim \dpi\\ a\sim\pi'}\left[\adv(s,a)\right] \pm 2 \nu \E{s\sim \dpi}[\TV{\pi'}{\pi}[s]]  \\ \text{and} \quad \nu &= \sigma^{\star} \max_{s} \big| \E{a\sim\pi'}[\adv(s,a)] \big|.
\end{align*}

\end{proposition}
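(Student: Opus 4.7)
The plan is to derive the two-sided bound by chaining together Lemma \ref{lemma:policy_impd} and Lemma \ref{lemma:d_and_pi}, with essentially no additional analytic work beyond unpacking the absolute value. The target inequalities have the shape ``expected advantage $\pm$ (coefficient) $\times$ (expected TV of policies)'', and the two preceding lemmas already give exactly these two ingredients: one expresses $J(\pi')-J(\pi)$ as an on-policy advantage plus a residual controlled by $\TV{\dpip}{\dpi}$, and the other controls $\TV{\dpip}{\dpi}$ by an expected policy-TV weighted by $\sigma^\star$.

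First, I would write out Lemma \ref{lemma:policy_impd} in its unpacked form, namely
\begin{equation*}
-2\epsilon\, \TV{\dpip}{\dpi} \;\leq\; J(\pi')-J(\pi) - \E{s\sim \dpi\\ a\sim\pi'}\left[\adv(s,a)\right] \;\leq\; 2\epsilon\, \TV{\dpip}{\dpi},
\end{equation*}
with $\epsilon=\max_s\big|\E{a\sim\pi'}[\adv(s,a)]\big|$. Next, since $\epsilon\geq 0$, I substitute the bound from Lemma \ref{lemma:d_and_pi}, $\TV{\dpip}{\dpi} \leq \sigma^{\star}\,\E{s\sim\dpi}[\TV{\pi'}{\pi}[s]]$, into both sides, obtaining the same inequality with $2\epsilon\,\TV{\dpip}{\dpi}$ replaced by $2\sigma^\star\epsilon\,\E{s\sim\dpi}[\TV{\pi'}{\pi}[s]] = 2\nu\,\E{s\sim\dpi}[\TV{\pi'}{\pi}[s]]$, using the definition $\nu=\sigma^\star\epsilon$. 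Rearranging so that $J(\pi')-J(\pi)$ is alone in the middle yields exactly $L_\pi^-(\pi')\leq J(\pi')-J(\pi)\leq L_\pi^+(\pi')$.

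Two small sanity checks I would mention explicitly: (i) the ergodic/unichain assumption is what makes both $\dpi,\dpip$ well-defined and makes $\sigma^\star = \max_\pi \max_{i\neq 1}(1-\lambda_{\pi,i})^{-1/2}$ finite, so $\nu<\infty$; and (ii) the sign of $\epsilon\sigma^\star$ is nonnegative, which is needed to preserve inequality direction when substituting the upper bound from Lemma \ref{lemma:d_and_pi} into both sides of the two-sided residual bound. There is no real obstacle here — the proposition is essentially a packaging statement combining the two lemmas, and the only subtlety is correctly tracking signs when converting Lemma \ref{lemma:policy_impd}'s absolute-value form into the $\pm$ form that defines $L_\pi^\pm$.
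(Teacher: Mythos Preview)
Your proposal is correct and matches the paper's approach exactly: the paper states that Proposition~\ref{prop:avg_policy_imp} follows by ``combining the bounds in Lemma~\ref{lemma:policy_impd} and Lemma~\ref{lemma:d_and_pi}'' and gives no further argument. Your unpacking of the absolute value, substitution of the Lemma~\ref{lemma:d_and_pi} bound (using $\epsilon\sigma^\star\geq 0$), and identification $\nu=\sigma^\star\epsilon$ is precisely what is intended.
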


It is interesting to compare the inequalities of Equation \eqref{eq:avg_policy_imp} to Equation \eqref{eq:policy_diff}. The term $\E{}[\adv(s,a)]$ in Prop. \ref{prop:avg_policy_imp} is somewhat of a \textit{surrogate} approximation to $J(\pi') - J(\pi)$, in the sense that it uses $d_{\pi}$ instead of $d_{\pi'}$. As discussed before, we do not have access to $d_{\pi'}$ since the trajectories of the new policy are not available unless the policy itself is updated. This surrogate is a first order approximation to $J(\pi') - J(\pi)$ in the parameters of $\pi'$ in a neighborhood around $\pi$ as in \cite{kakade2002approximately}. Hence, Eq.  \eqref{eq:avg_policy_imp} can be viewed as bounding the worst-case approximation error.

Extending this discussion to the cost function of our CMDP, similar expressions follow immediately.

\begin{corollary}
\label{cor:thm:avg_constraint_imp}
For any policies $\pi', \pi$, and any cost function $C_i$, the following bound holds:

\begin{equation}\label{eq:avg_constraint_imp}
 M_{\pi}^{-}(\pi')\leq J_{C_i} (\pi') - J_{C_i} (\pi) \leq M_{\pi}^{+}(\pi') 
\end{equation}
where
\begin{align*}
    M_{\pi}^{\pm}(\pi') &= \E{s\sim \dpi\\ a\sim\pi'}\left[\advc(s,a)\right] \pm 2 \nu_{C_{i}} \E{s\sim \dpi}[\TV{\pi'}{\pi}[s]] \\ \text{and} \quad \nu_{C_{i}} &= \sigma^{\star} \max_{s} \big| \E{a\sim\pi'}[\advc(s,a)]\big|.
\end{align*}
\end{corollary}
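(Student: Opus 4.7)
The plan is to observe that the corollary is a direct transcription of Proposition \ref{prop:avg_policy_imp} with the reward function $r$ replaced everywhere by the cost function $C_i$, and then argue that none of the ingredients leading to Proposition \ref{prop:avg_policy_imp} actually uses any property of $r$ beyond its being a real-valued function of transitions. Concretely, the objects $J_{C_i}(\pi)$, $\bar{V}_{C_i}^{\pi}$, $\bar{Q}_{C_i}^{\pi}$, and $\bar{A}_{C_i}^{\pi}$ are defined in exact analogy with $J(\pi)$, $\bar{V}^{\pi}$, $\bar{Q}^{\pi}$, and $\bar{A}^{\pi}$, only with $r$ replaced by $C_i$.

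First I would apply Lemma \ref{lemma:policy_diff} to the cost function to obtain the identity
\[
J_{C_i}(\pi') - J_{C_i}(\pi) = \mathbb{E}_{s\sim d_{\pi'},\, a\sim\pi'}\bigl[\bar{A}_{C_i}^{\pi}(s,a)\bigr],
\]
which is valid since the lemma only relies on the unichain assumption on the shared transition kernel $P$, not on the specific one-step signal. Then I would apply Lemma \ref{lemma:policy_impd} with $\bar{A}_{C_i}^{\pi}$ in place of $\bar{A}^{\pi}$ to get
\[
\Bigl| J_{C_i}(\pi') - J_{C_i}(\pi) - \mathbb{E}_{s\sim d_{\pi},\, a\sim\pi'}[\bar{A}_{C_i}^{\pi}(s,a)] \Bigr| \leq 2\epsilon_{C_i}\, D_{\text{TV}}(d_{\pi'}\|d_{\pi}),
\]
where $\epsilon_{C_i} := \max_s |\mathbb{E}_{a\sim\pi'}[\bar{A}_{C_i}^{\pi}(s,a)]|$. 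Again, the argument underlying Lemma \ref{lemma:policy_impd} is purely algebraic in the advantage function and uses only the fact that it is averaged against the stationary distribution of the same Markov chain, so swapping $r\mapsto C_i$ is valid.

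Next, I would invoke Lemma \ref{lemma:d_and_pi} to bound $D_{\text{TV}}(d_{\pi'}\|d_{\pi}) \leq \sigma^{\star}\,\mathbb{E}_{s\sim d_{\pi}}[D_{\text{TV}}(\pi'\|\pi)[s]]$. This step is entirely independent of the reward or cost function, since it bounds a quantity depending only on the Markov chains $P_{\pi}$ and $P_{\pi'}$. Substituting this bound into the previous inequality, and recognizing that $\sigma^{\star}\epsilon_{C_i} = \nu_{C_i}$ by definition, yields
\[
\Bigl| J_{C_i}(\pi') - J_{C_i}(\pi) - \mathbb{E}_{s\sim d_{\pi},\, a\sim\pi'}[\bar{A}_{C_i}^{\pi}(s,a)] \Bigr| \leq 2\nu_{C_i}\,\mathbb{E}_{s\sim d_{\pi}}[D_{\text{TV}}(\pi'\|\pi)[s]],
\]
which, after splitting the absolute value into two one-sided inequalities, is exactly $M_{\pi}^{-}(\pi') \leq J_{C_i}(\pi') - J_{C_i}(\pi) \leq M_{\pi}^{+}(\pi')$.

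There is essentially no technical obstacle here: the ``hard part'' was already done in proving Proposition \ref{prop:avg_policy_imp}, and the only thing one needs to verify is that every intermediate lemma is reward-agnostic, which indeed it is. Thus the corollary follows by a verbatim repetition of the argument for Proposition \ref{prop:avg_policy_imp}, with the substitutions $r \mapsto C_i$, $\bar{A}^{\pi}\mapsto\bar{A}_{C_i}^{\pi}$, $J\mapsto J_{C_i}$, and $\epsilon\mapsto\epsilon_{C_i}$.
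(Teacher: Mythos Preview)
Your proposal is correct and matches the paper's own treatment: the paper simply states that ``similar expressions follow immediately'' for the cost functions, i.e., Corollary~\ref{cor:thm:avg_constraint_imp} is obtained by replaying the argument of Proposition~\ref{prop:avg_policy_imp} with $r\mapsto C_i$, exactly as you describe. Your explicit verification that Lemmas~\ref{lemma:policy_diff}, \ref{lemma:policy_impd}, and~\ref{lemma:d_and_pi} are reward-agnostic is more detailed than what the paper spells out, but the approach is identical.
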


Until now, we have been dealing with bounds given with regards to the TV divergence of the policies. However, in practice, bounds with respect to the KL divergence of policies is more commonly used as in \cite{schulman2015trust, schulman2016high, ma2021average}. 
From Pinsker’s and Jensen's inequalities, we have that 
\begin{equation}\label{eq:tv-kl}
\begin{aligned}
  \E{s\sim\dpi}\big[\TV{\pi'}{\pi}[s]\big] 
  &\leq \sqrt{\E{s\sim\dpi}\big[\KL{\pi'}{\pi}][s]\big]/2}.
\end{aligned}
\end{equation}

We can thus use Eq. \eqref{eq:tv-kl} in the bounds of Proposition  \ref{prop:avg_policy_imp} and Corollary \ref{cor:thm:avg_constraint_imp} to make policy improvement guarantees, i.e., if we find updates such that $\pikup \in \argmax_{\pi} L_{\pik}^{-}(\pi)$, then we will have monotonically increasing policies as, at iteration $k$, $\E{s\sim\dpik,  a\sim\pi}[\advk(s,a)]=0$, $\E{s\sim\dpik}[\KL{\pi}{\pik}[s]]=0$ for $\pi=\pik$, implying that $J(\pikup)-J(\pik)\geq 0$. However, this sequence does not guarantee constraint satisfaction at each iteration, so we now turn to trust region methods to incorporate constraints, do policy improvement and provide safety guarantees.

\subsection{Trust Region Based Approach}

For large or continuous state and action CMDPs, solving for the exact optimal policy is impractical. However, \textit{trust region}-based policy optimization algorithms have proven to be effective for solving such problems as in \cite{schulman2015trust, schulman2016high, schulman2017proximal, achiam2017advanced}.
For these approaches, we usually consider some parameterized policy class $\Pi_{\Theta} = \{\pi_{\theta}: \theta \in \Theta \}$ for tractibility. In addition, for CMDPs, we also require the policy iterates to be feasible, so instead of optimizing just over $\Pi_{\Theta}$, we optimize over $\Pi_{\Theta} \cap \Pi_C$. However, it is much easier to solve the above problem if we introduce hard constraints, rather than limiting the set to $\Pi_{\Theta} \cap \Pi_C$. Therefore, we now introduce the ACPO algorithm, which is inspired by the trust region formulations above as the following optimization problem:
\begin{equation}\label{eq:acpo_trust}
\begin{aligned}
  \underset{\pi\in\Pi_{\Theta}}{\text{maximize}} & \quad
\E{s\sim\dpolk\\ a\sim\pi}[\wb{A}^{\polk}(s,a)] \\
\text{s.t.} &\qquad J_{C_i} (\polk) + \E{s\sim\dpolk\\ a\sim\pi}[\wb{A}_{C_{i}}^{\polk}(s,a)] \leq l_i, \;\;\; \forAll i \\
& \qquad \avKL{\pi}{\polk} \leq \delta
\end{aligned}
\end{equation}

where $\avKL{\pi}{\polk} := \E{s\sim\dpolk}[\KL{\pi}{\polk}[s]]$, $\wb{A}^{\polk}(s,a)$ is the average advantage function defined earlier, and $\delta > 0$ is a step size. We use this form of updates as it is an approximation to the lower bound given in Proposition \ref{prop:avg_policy_imp} and the upper bound given in Corollary \ref{cor:thm:avg_constraint_imp}.

In most cases, the trust region threshold for formulations like Eq. \eqref{eq:acpo_trust} are heuristically motivated. We now show that it is quantitatively motivated and comes with a worst case performance degradation and constraint violation. Proof is in Appendix \ref{proof:trust_proof}.

\begin{restatable}{theorem}{trustdegradationviolation}
\label{th:trust_degradation_violation}
Let $\polkup$ be the optimal solution to Eq.  \eqref{eq:acpo_trust} for some $\polk\in\Pi_{\Theta}$. Then, we have
\begin{align}
J(\polkup)-J(\polk)\geq&-\sqrt{2(\delta+V_{max})}\nu^{\polkup} \\
\text{and} \;\; J_{C_{i}}(\polkup)\leq l_{i} & +\sqrt{2(\delta+V_{max})}\nu^{\polkup}_{C_{i}} \forAll i,
\end{align}
where $\nu^{\polkup}=\sigma^{\polkup}\max_{s}\big|\E{a\sim\polkup}[\wb{A}^{\polk}(s,a)]\big|$, $\nu^{\polkup}_{C_{i}} = \sigma^{\polkup} \max_{i,s}\big|\E{a\sim\polkup}[\wb{A}_{C_{i}}^{\polk}(s,a)]\big|$, $V_{max} = \max_{i} \beta_{i}^{2}$, ~\text{and}~ $\beta_{i} = [J_{C_{i}}(\polk)-l_{i}]_{+}$.
\end{restatable}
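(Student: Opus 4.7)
The plan is to apply the average-setting sensitivity bounds from Proposition~\ref{prop:avg_policy_imp} and Corollary~\ref{cor:thm:avg_constraint_imp} to the pair $(\polk,\polkup)$ and then control each piece on the right-hand side using the constraints of problem~\eqref{eq:acpo_trust} together with Pinsker's inequality~\eqref{eq:tv-kl}. Concretely, Proposition~\ref{prop:avg_policy_imp} yields
\[
J(\polkup)-J(\polk)\;\geq\;\E{s\sim\dpolk\\a\sim\polkup}[\wb{A}^{\polk}(s,a)]\;-\;2\nu^{\polkup}\,\E{s\sim\dpolk}[\TV{\polkup}{\polk}[s]],
\]
and Corollary~\ref{cor:thm:avg_constraint_imp} gives the analogous upper bound on $J_{C_i}(\polkup)-J_{C_i}(\polk)$. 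The two inequalities of the theorem will fall out of these once I control (i) the surrogate advantage term, (ii) the surrogate cost-advantage term, and (iii) the expected TV divergence, all with respect to the update rule defined by \eqref{eq:acpo_trust}.

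I would first handle the case where $\polk$ is itself feasible for \eqref{eq:acpo_trust}, so $V_{max}=0$. Then $\polk$ lies in the feasible set of \eqref{eq:acpo_trust} and attains surrogate objective zero, so optimality of $\polkup$ forces $\E{s\sim\dpolk\\a\sim\polkup}[\wb{A}^{\polk}(s,a)]\geq 0$. Feasibility of $\polkup$ in \eqref{eq:acpo_trust} further gives $\E{s\sim\dpolk\\a\sim\polkup}[\wb{A}^{\polk}_{C_i}(s,a)]\leq l_i-J_{C_i}(\polk)$ for every $i$, while the KL constraint $\avKL{\polkup}{\polk}\leq\delta$ combined with \eqref{eq:tv-kl} yields $\E{s\sim\dpolk}[\TV{\polkup}{\polk}[s]]\leq\sqrt{\delta/2}$. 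Substituting these three facts into the two sensitivity inequalities immediately produces $J(\polkup)-J(\polk)\geq -\sqrt{2\delta}\,\nu^{\polkup}$ and $J_{C_i}(\polkup)\leq l_i+\sqrt{2\delta}\,\nu^{\polkup}_{C_i}$, which match the theorem in the case $V_{max}=0$.

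The main obstacle is the case where $\polk$ is infeasible, i.e.\ some $\beta_i=[J_{C_i}(\polk)-l_i]_+>0$, because then $\polk$ is no longer a valid comparison point in \eqref{eq:acpo_trust} and the step ``optimum $\geq$ value at $\polk$'' no longer forces a non-negative surrogate advantage. Here I would invoke Lemma~\ref{lemma:infeasible_kl_bound} from the appendix, which (as sketched in the Introduction) uses the sublevel sets of the cost constraints together with the projection inequality for the KL/Bregman divergence to quantify how far $\polkup$ can sit from $\polk$ in this regime. Its net effect is to inflate the effective KL budget from $\delta$ to $\delta+V_{max}$, so that Pinsker's inequality now gives $\E{s\sim\dpolk}[\TV{\polkup}{\polk}[s]]\leq\sqrt{(\delta+V_{max})/2}$; meanwhile, the surrogate advantage and cost-advantage terms are controlled by comparing $\polkup$ against the KL-projection of $\polk$ onto the sublevel set of the cost constraints rather than against $\polk$ itself. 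Propagating this enlarged TV bound through the two sensitivity inequalities of the first paragraph produces precisely $\sqrt{2(\delta+V_{max})}\,\nu^{\polkup}$ and $\sqrt{2(\delta+V_{max})}\,\nu^{\polkup}_{C_i}$, which also recover the feasible-case bounds when $V_{max}=0$. Identifying this clean correspondence between the squared slack $V_{max}=\max_i\beta_i^2$ and the extra trust-region budget is the single non-routine step of the argument.
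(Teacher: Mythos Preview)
Your proposal is essentially the paper's argument: combine Proposition~\ref{prop:avg_policy_imp} and Corollary~\ref{cor:thm:avg_constraint_imp} with Pinsker's inequality~\eqref{eq:tv-kl}, use feasibility of $\polk$ in \eqref{eq:acpo_trust} (objective value zero there) to make the surrogate advantage at $\polkup$ nonnegative, and invoke Lemma~\ref{lemma:infeasible_kl_bound} to bound $\avKL{\polkup}{\polk}$ by $\delta+V_{max}$. The paper's proof is terser---it does not separate feasible and infeasible cases explicitly, and in particular it does not introduce a projected comparison point for the surrogate-advantage terms as you suggest; it simply cites Lemma~\ref{lemma:infeasible_kl_bound} for the inflated KL budget and treats the advantage-term nonnegativity uniformly.
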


 \textbf{Remark 3.7.} Note that if the constraints are ignored (by setting $V_{max}=0$), then this bound is tighter than given in \cite{zhang2021average} for the unconstrained average-reward setting.
 
However, the update rule of Eq.  \eqref{eq:acpo_trust} is difficult to implement in practice as it takes steps that are too small, which degrades convergence. In addition, it requires the exact knowledge of $\wb{A}^{\polk}(s,a)$ which is computationally infeasible for large-scale problems. In the next section, we will introduce a specific sampling-based practical algorithm to alleviate these concerns.

\section{Practical Implementation of ACPO}
\label{sec:acpo_implementation}

In this section, we introduce a practical version of the ACPO algorithm with a principle recovery method. With a small step size $\delta$, we can approximate the reward function and constraints with a first order expansion, and approximate the KL divergence constraint with a second order expansion. This gives us a new optimization problem which can be solved exactly using Lagrangian duality.

\subsection{An Implementation of ACPO}

Since we are working with a parameterized class, we shall now overload notation to use $\theta_{k}$ as the policy at iteration $k$, i.e., $\theta_{k} \equiv \polk$. In addition, we use $g$ to denote the gradient of the advantage function objective, $a_{i}$ to denote the gradient of the advantage function of  the cost $C_{i}$, $H$ as the Hessian of the KL-divergence. Formally,
{\small
\begin{align*}
    g &:= \nabla_\theta\E{\substack{s\sim d_{\theta_k} \\ a\sim \theta}}[\wb{A}^{\theta_k}(s,a)] \Bigr\vert_{\theta=\theta_k}, \\ 
    a_{i} &:= \nabla_\theta\E{\substack{s\sim d_{\theta_k} \\ a\sim \theta}}[\wb{A}^{\theta_k}_{C_{i}}(s,a)] \Bigr\vert_{\theta=\theta_k}, \\ 
    H &:= \nabla^2_\theta \E{s\sim d_{\theta_k}}\big[\KL{\theta}{\theta_k})[s]\big] \Bigr\vert_{\theta=\theta_k}.
\end{align*}
}
In addition, let $c_i := J_{C_i}(\theta_k) - l_i$. The approximation to the problem in Eq.  \eqref{eq:acpo_trust} is: 
\begin{equation}
\label{eq:acpo_approx}
\begin{aligned}
\max_{\theta} \;\;\; &  g^T (\theta - \theta_k) & \\
\text{s.t.} & \quad c_i + a_i^T (\theta - \theta_k) \leq 0, \forAll i \\ \text{and,}& \quad \tfrac{1}{2} (\theta - \theta_k)^T H (\theta - \theta_k) \leq \delta.
\end{aligned}
\end{equation}
This is a convex optimization problem in which strong duality holds, and hence it can be solved using a Lagrangian method. The update rule for the dual problem then takes the form
\begin{equation}
\theta_{k+1} = \theta_{k} + \frac{1}{\lambda^{\star}} H^{-1} \left(g - A \mu^{\star}\right). 
\label{eq:acpo_dual_update}
\end{equation}

where $\lambda^{\star}$ and $\mu^{\star}$ are the Lagrange multipliers satisfying the dual
\begin{equation}
\begin{aligned}
\max_{\begin{subarray}{c} \lambda \geq 0 \\ \mu \succeq 0\end{subarray}} \frac{-1}{2\lambda} \left( g^T H^{-1} g - 2 r^T \mu + \mu^T S \mu\right) + \mu^T c - \frac{\lambda \delta}{2},
\end{aligned} 
\label{eq:acpo_dual}
\end{equation}

with $r := g^T H^{-1} A$, $S := A^T H^{-1} A$, $A := [a_{1} , \cdots , a_{m}]$, and $c := [c_{1}, \cdots , c_{m}]^T$.

\subsection{Feasibility and Recovery}
\label{sec:recovery}

The approximation regime described in Eq.  \eqref{eq:acpo_approx} requires $H$ to be invertible. For large parametric policies, $H$ is computed using the conjugate gradient method as in \cite{schulman2015trust}. However, in practice, using this approximation along with the associated statistical sampling errors, there might be potential violations of the approximate constraints leading to infeasible policies. 

To rectify this, for the case where we only have one constraint, one can recover a feasible policy by applying a recovery step inspired by the TRPO update on the cost surrogate as:
\begin{equation}
\resizebox{.9\linewidth}{!}{$
\begin{aligned}
\theta_{k+1/2} = \theta_{k} - \sqrt{2\delta} \bigg[t \cdot \frac{H^{-1} a}{\sqrt{a^T H^{-1} a}} + (1-t) \cdot \frac{H^{-1} g}{\sqrt{g^T H^{-1} g}} \bigg]
\end{aligned}
$}
\label{eq:acpo_recovery}
\end{equation}

where $t \in [0,1]$. 
Contrasting with the policy recovery update of \cite{achiam2017constrained} which only uses the cost advantage function gradient $a$, we introduce the reward advantage function gradient $g$ as well. This choice is to ensure recovery while simultaneously balancing the ``regret'' of not choosing the best (in terms of the objective value) policy $\pik$. In other words, we wish to find a  policy $\pi_{k+1/2}$ as close to $\pik$ in terms of their objective function values. We follow up this step with a simple linesearch to find feasible $\pikup$. Based on this, Algorithm \ref{alg:practical_acpo} provides a basic outline of ACPO. For more details of the algorithm, see Appendix \ref{appendix:approx_acpo}.


\begin{algorithm}[!t]
   \caption{Average-Constrained Policy Optimization (ACPO)}
   \label{alg:practical_acpo}
\begin{algorithmic}[1]
   \STATE {\bfseries Input:} Initial random policy $\pi_0 \in \Pi_{\theta}$
	 \FOR{$k = 0,1,2,...,K$} 
	 \STATE Sample a set of trajectories $\Omega$ using $\pi_k = \polk$
	 \STATE Find estimates of $g, a, H, c$ using $\Omega$ 
	 \IF{a feasible solution to Equation \eqref{eq:acpo_approx} exists}
	 	\STATE Solve dual problem in Equation \eqref{eq:acpo_dual} for $\lambda^{\star}_k, \mu^{\star}_k$ 
	 	\STATE Find policy update $\pikup$ with Equation \eqref{eq:acpo_dual_update}
	 \ELSE
	 	\STATE Find recovery policy $\pi_{k+1/2}$ with Equation \eqref{eq:acpo_recovery}
	 	\STATE Obtain $\pikup$ by linesearch till approximate constraint satisfaction of Equation \eqref{eq:acpo_approx}
	 \ENDIF
	 
	\ENDFOR
\end{algorithmic}
\end{algorithm}


\begin{figure*}[t]
    \hspace{0.25cm} Average Rewards: \newline
    {
        \includegraphics[width=0.32\textwidth]{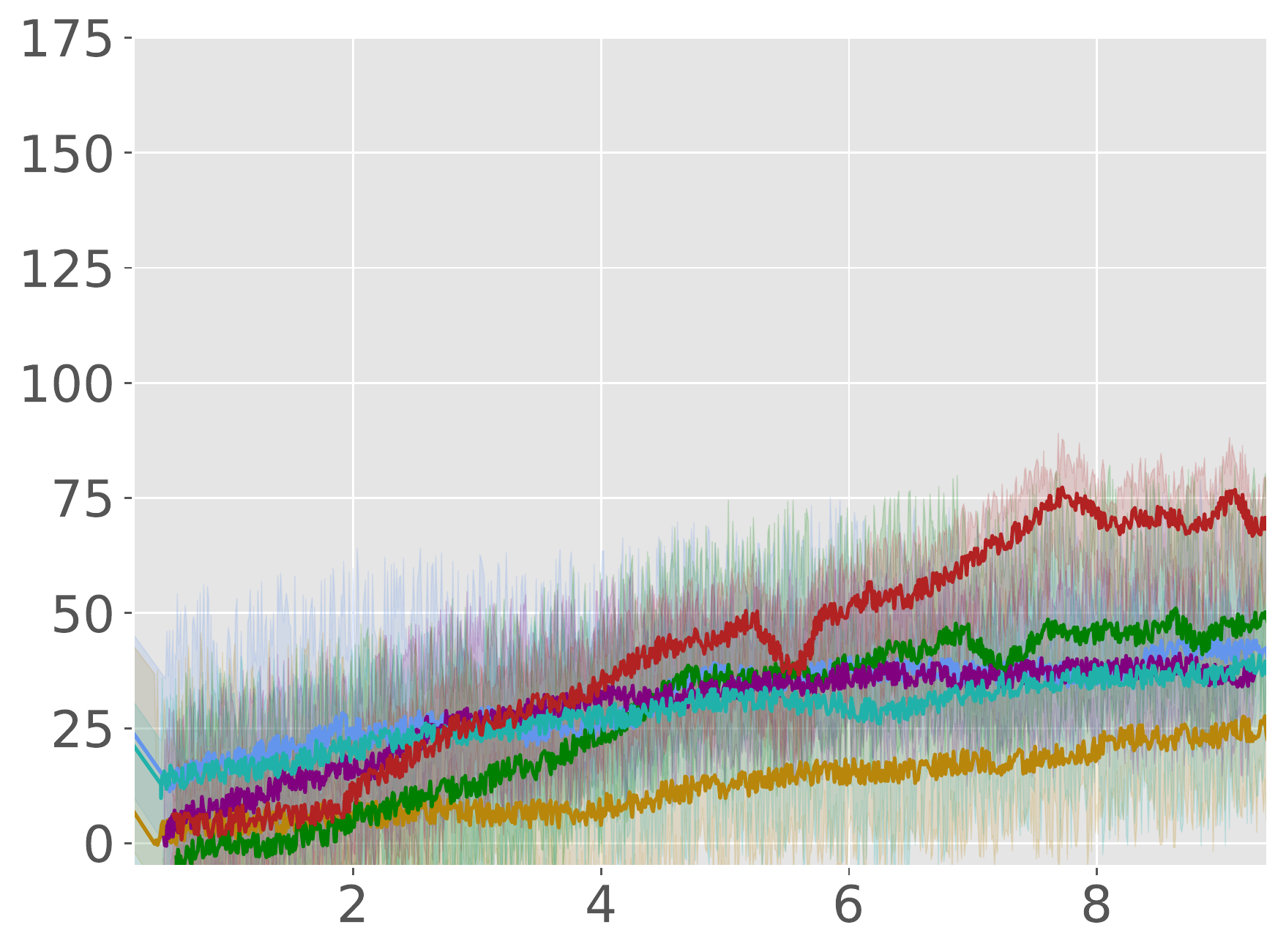}
        \label{fig:ant_gather_rewards}
    }
    {
        \includegraphics[width=0.32\textwidth]{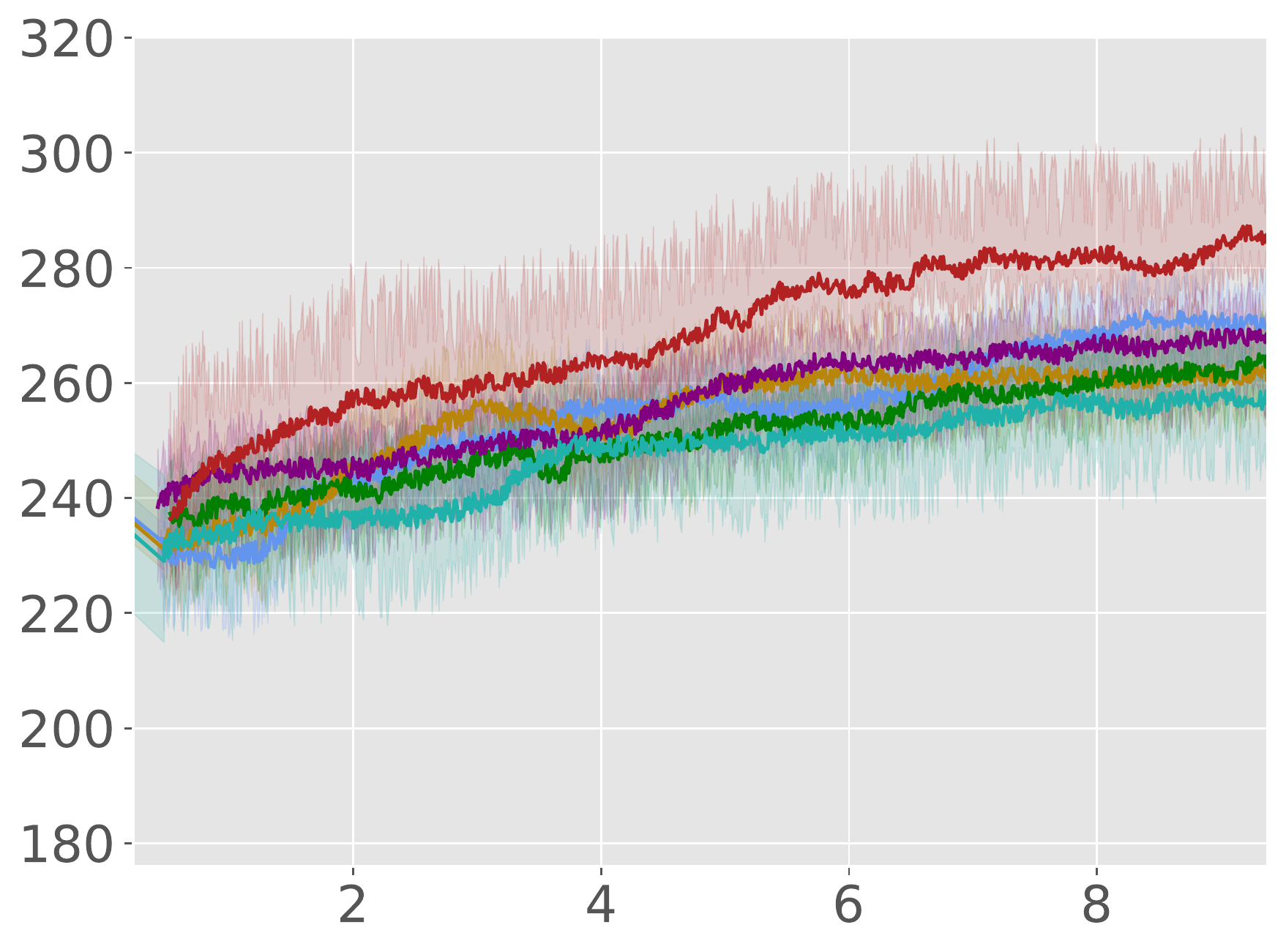}
        \label{fig:bottleneck_rewards}
    }
    {
        \includegraphics[width=0.32\textwidth]{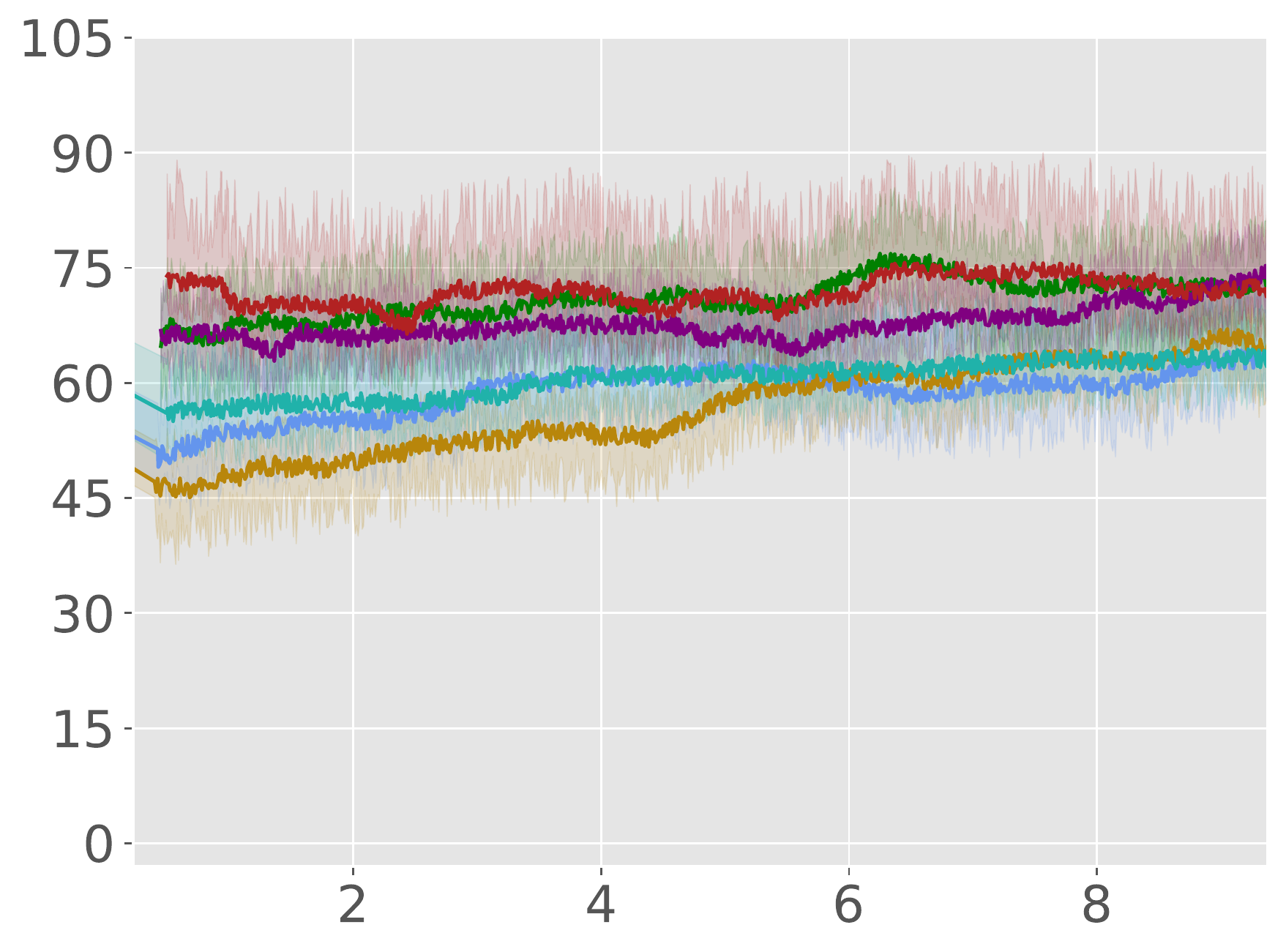}
        \label{fig:grid_rewards}
    }
    \rule{\linewidth}{0.5pt}
    \hspace{2cm} Average Constraint values: \hspace{1.75cm} \includegraphics[width=0.6\textwidth]{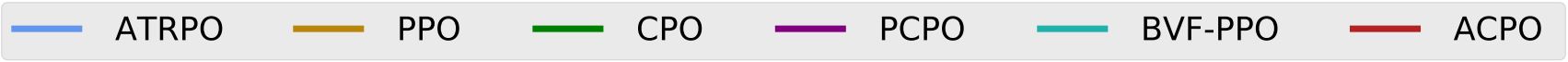} \newline
    \subfloat[Ant Gather]{
        \includegraphics[width=0.32\textwidth]{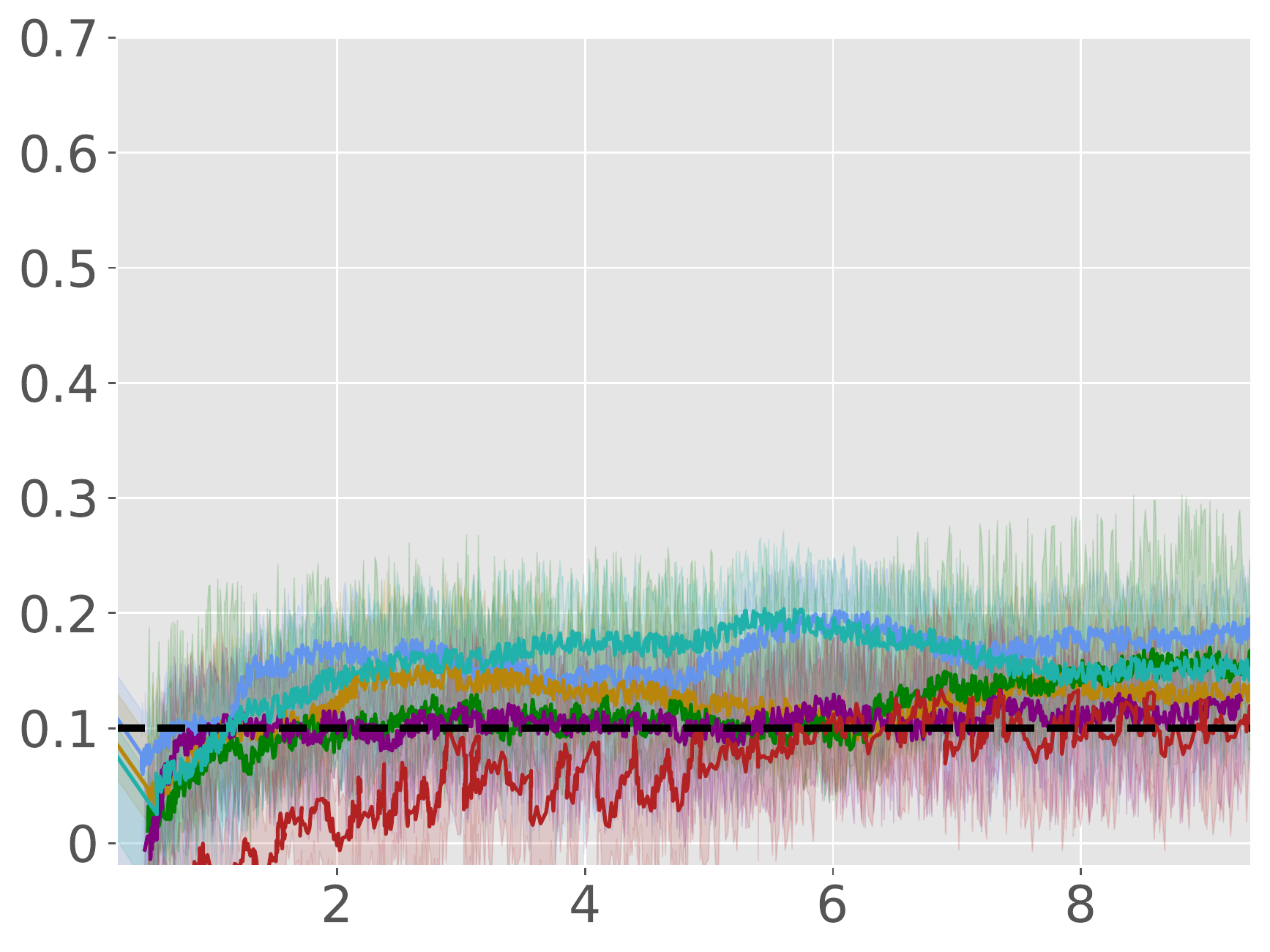}
        \label{fig:ant_gather_costs}
    }
    \subfloat[Bottleneck]{
        \includegraphics[width=0.32\textwidth]{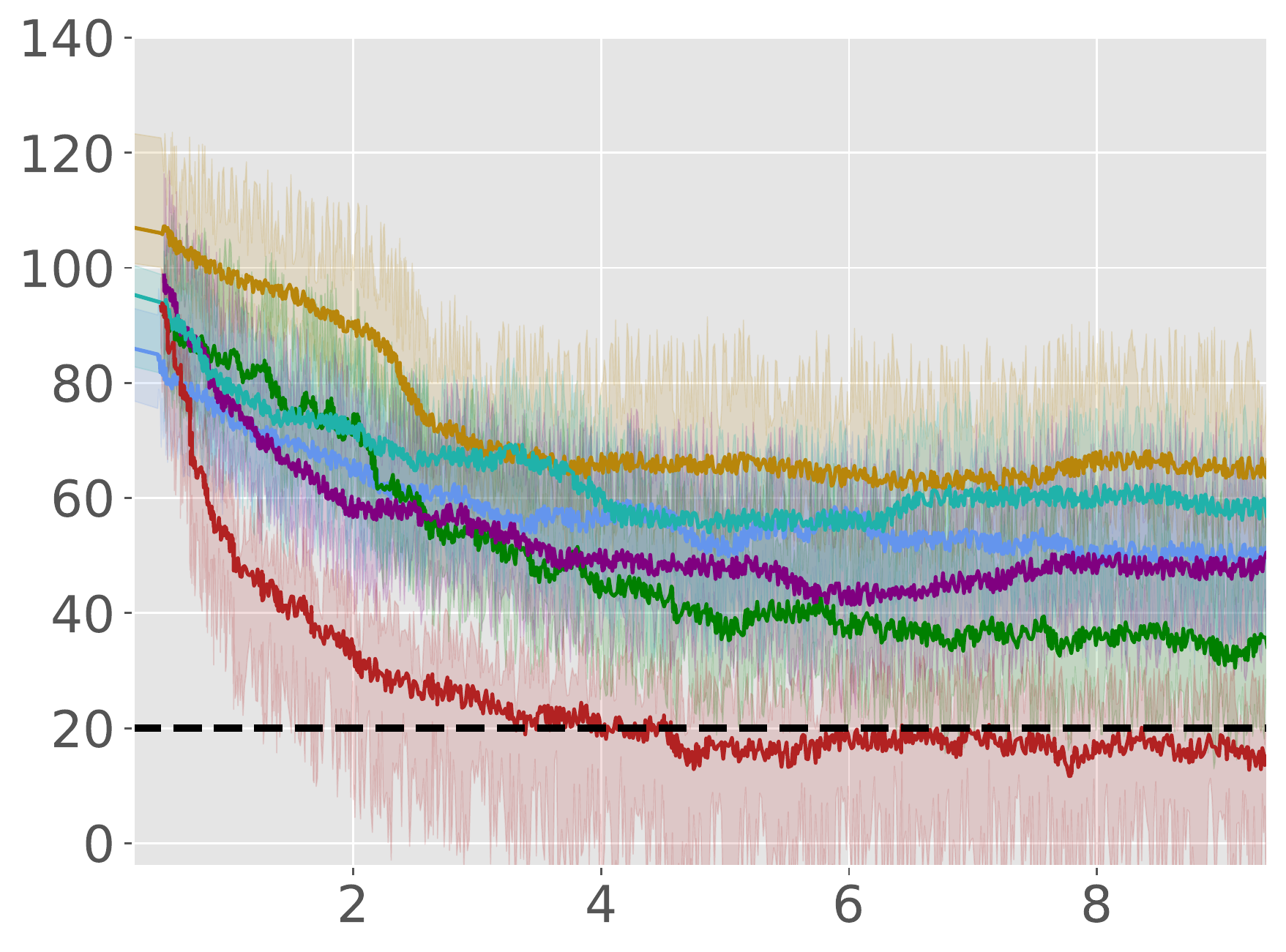}
        \label{fig:bottleneck_costs}
    }
    \subfloat[Grid]{
        \includegraphics[width=0.32\textwidth]{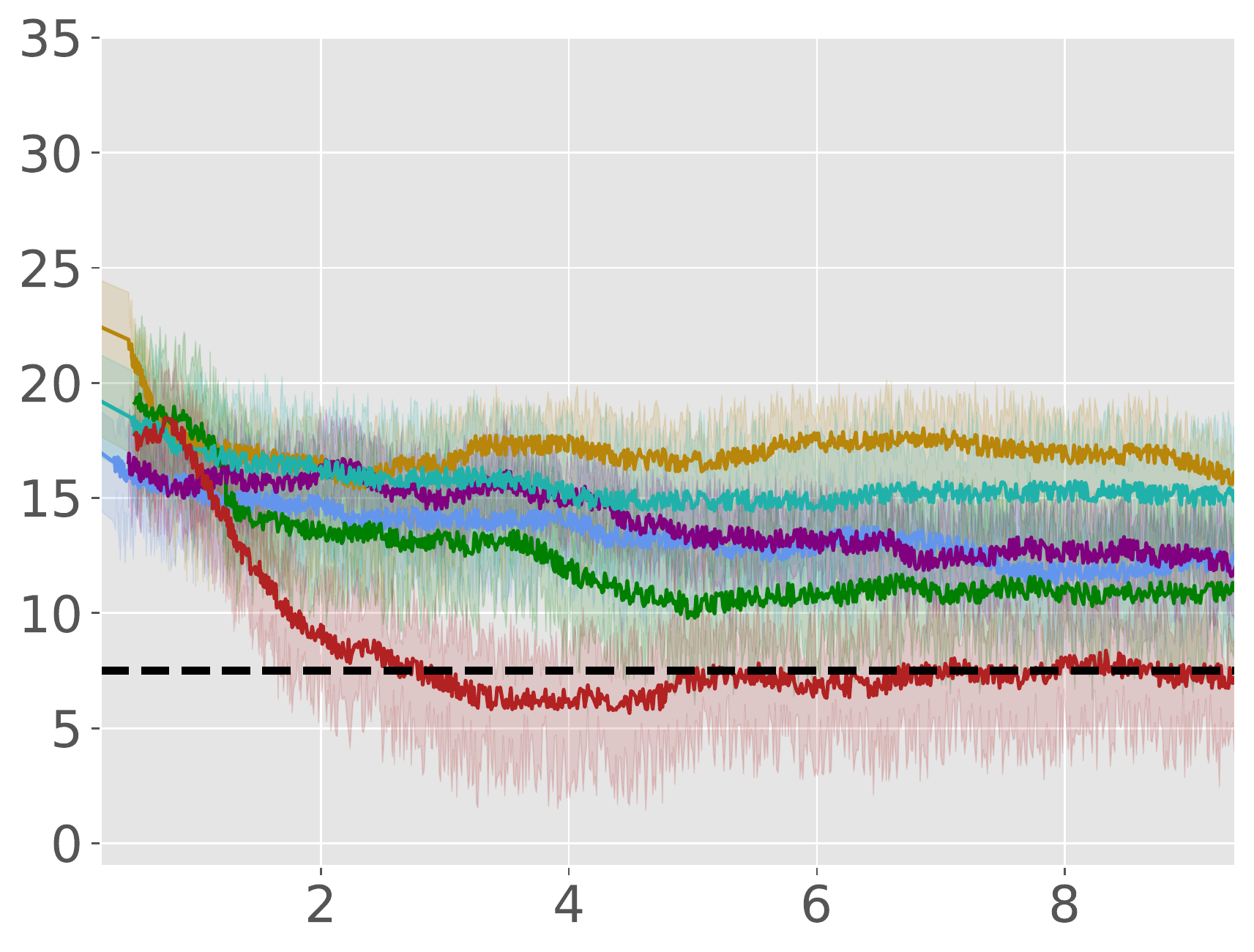}
        \label{fig:grid_costs}
    }
    \caption{The average reward and constraint cost function values vs iterations (in $10^{4}$) learning curves for some algorithm-task pairs. Solid lines in each figure are the empirical means, while the shaded area represents 1 standard deviation, all over 5 runs. The dashed line in constraint plots is the constraint threshold $l$. ATRPO and PPO are tested with constraints, which are included in their Lagrangian formulation. Additional results are available in Appendix \ref{appendix:additional_results}.}
    \label{fig:rewards_costs_comparison}
\end{figure*}


\section{Empirical Results}
\label{sec:empirical}
We conducted a series of experiments to evaluate the relative performance of the ACPO algorithm and  answer the following questions: 
(i) Does ACPO learn a sequence of constraint satisfying policies while maximizing the average reward in the long run?
(ii) How does ACPO compare with the already existing constraint policy optimization algorithms which are applied with a large discount factor?
(iii) What are the factors that affect the performance of ACPO? 

We work with the OpenAI Gym  environments to train the various learning agent on the following tasks - \textit{Gather}, \textit{Circle}, \textit{Grid}, and \textit{Bottleneck} tasks (see Figure \ref{fig:env_overview} in Appendix \ref{appendix:environments} for more details on the environments). For our experiments we only work with a single constraint with policy recovery using Eq.  \eqref{eq:acpo_recovery} (this is only a computational limitation; ACPO in principle can handle multiple constraints). We compare ACPO with the following baseline algorithms: CPO by \cite{achiam2017constrained}, ATRPO by \cite{zhang2021average}, PCPO by \cite{yang2020projection} (a close variant of CPO), BVF-PPO by \cite{pmlr-v119-satija20a} and PPO by \cite{schulman2017proximal}.

Although ATRPO and PPO originally do not incorporate constraints, for fair comparison, we introduce constraints using a Lagrangian. Also, CPO, PCPO and PPO are compared with $\gamma=0.999$. See Appendix \ref{appendix:experimental_details} for more details.

\subsection{Evaluation Details and Protocol}

For the Gather and Circle tasks we test two distinct agents:  a point-mass ($S \subseteq \Rbb^{9}, A \subseteq \Rbb^{2}$), and an ant robot ($S \subseteq \Rbb^{32}, A \subseteq \Rbb^{8}$). The agent in the Bottleneck task in $S \subseteq \Rbb^{71}, A \subseteq \Rbb^{16}$, and for the Grid task is $S \subseteq \Rbb^{96}, A \subseteq \Rbb^{4}$. We use two hidden layer neural networks to represent Gaussian policies for the tasks. For Gather and Circle, size is (64,32) for both layers, and for Grid and Bottleneck the layer sizes are (16,16) and (50,25). We set the step size $\delta$ to $10^{-4}$, and for each task, we conduct 5 runs to get the mean and standard deviation for reward objective and cost constraint values during training. We train CPO, PCPO, and PPO with the discounted objective, however, evaluation and comparison with BVF-PPO, ATRPO and ACPO\footnote{Code of the ACPO implementation will be made available on GitHub.} is done using the average reward objective (this is a standard evaluation scheme  as in \cite{schulman2015trust, wu2017scalable, vuong2019supervised}). 

For each environment, we train an agent for $10^{5}$ steps, and for every $10^{3}$ steps, we instantiate 10 evaluation trajectories with the current (deterministic) policy. For each of these trajectories, we calculate the trajectory average reward for the next $10^{3}$ steps and finally report the total average-reward as the mean of these 10 trajectories. Learning curves for the algorithms are compiled in Figure \ref{fig:rewards_costs_comparison} (for Point-Circle, Point-Gather, and Ant-Circle see Appendix \ref{appendix:additional_results}). 

Since there are two objectives  (rewards in the objective and costs in the constraints), we show the plots which maximize the reward objective while satisfying the cost constraint. See Appendix \ref{appendix:practical_acpo} and \ref{appendix:experimental_details} for more details.


\subsection{Performance Analysis}

From Figure \ref{fig:rewards_costs_comparison}, we can see that ACPO is able to improve the reward objective while having approximate constraint satisfaction on all tasks. In particular, ACPO is the only algorithm that best learns almost-constraint-satisfying maximum average-reward policies across all tasks: in a simple Gather environment, ACPO is able to almost exactly track the cost constraint values to within the given threshold $l$;  however, for the high dimensional Grid and Bottleneck environments we have more constraint violations due to complexity of the policy behavior. Regardless, in these environments, ACPO still outperforms all other baselines. 

\noindent\textbf{ACPO vs. CPO/PCPO.} For the Point-Gather environment (see Figure \ref{fig:appendix_rewards_costs_comparison}),  we see that initially ACPO and CPO/PCPO give relatively similar performance, but eventually ACPO improves over CPO and PCPO by 52.5\% and 36.1\% on average-rewards respectively. This superior performance does not come with more constraint violation. The Ant-Gather environment particularly brings out the effectiveness of ACPO where it shows 41.1\% and 61.5\% improvement over CPO and PCPO respectively, while satisfying the constraint. In the high dimensional Bottleneck and Grid environments, ACPO is particularly quick at optimizing for low constraint violations, while improving over PCPO and CPO in terms of average-reward. 

\noindent\textbf{ACPO vs Lagrangian ATRPO/PPO.} One could suppose to use the state of the art unconstrained policy optimization algorithms with a Lagrangian formulation to solve the average-rewards CMDP problem in consideration, but we see that such an approach, although principled in theory, does not give satisfactory empirical results. This can be particularly seen in the Ant-Circle, Ant-Gather, Bottleneck, and Grid environments, where Lagrangian ATRPO and PPO give the least rewards, while not even satisfying the constraints. If ATRPO and PPO were used with constraints ignored, one would see higher rewards but even worse constraint violations, which are not useful.

\noindent\textbf{ACPO vs BVF-PPO.} BVF-PPO is a whole different formulation than the other baselines, as it translates the cumulative cost constraints into state-based constraints, which results in an almost-safe policy improvement method which maximizes returns at every step. However, we see that this approach fails to satisfy the constraints even in the moderately difficult Ant Gather environment, let alone the high dimensional Bottleneck and Grid environments.



\subsection{Dependence of the Recovery Regime}
\label{subsec:recovery_regime}

\begin{figure}[t]
    \centering
    \subfloat[Rewards]{  \includegraphics[width=0.22\textwidth]{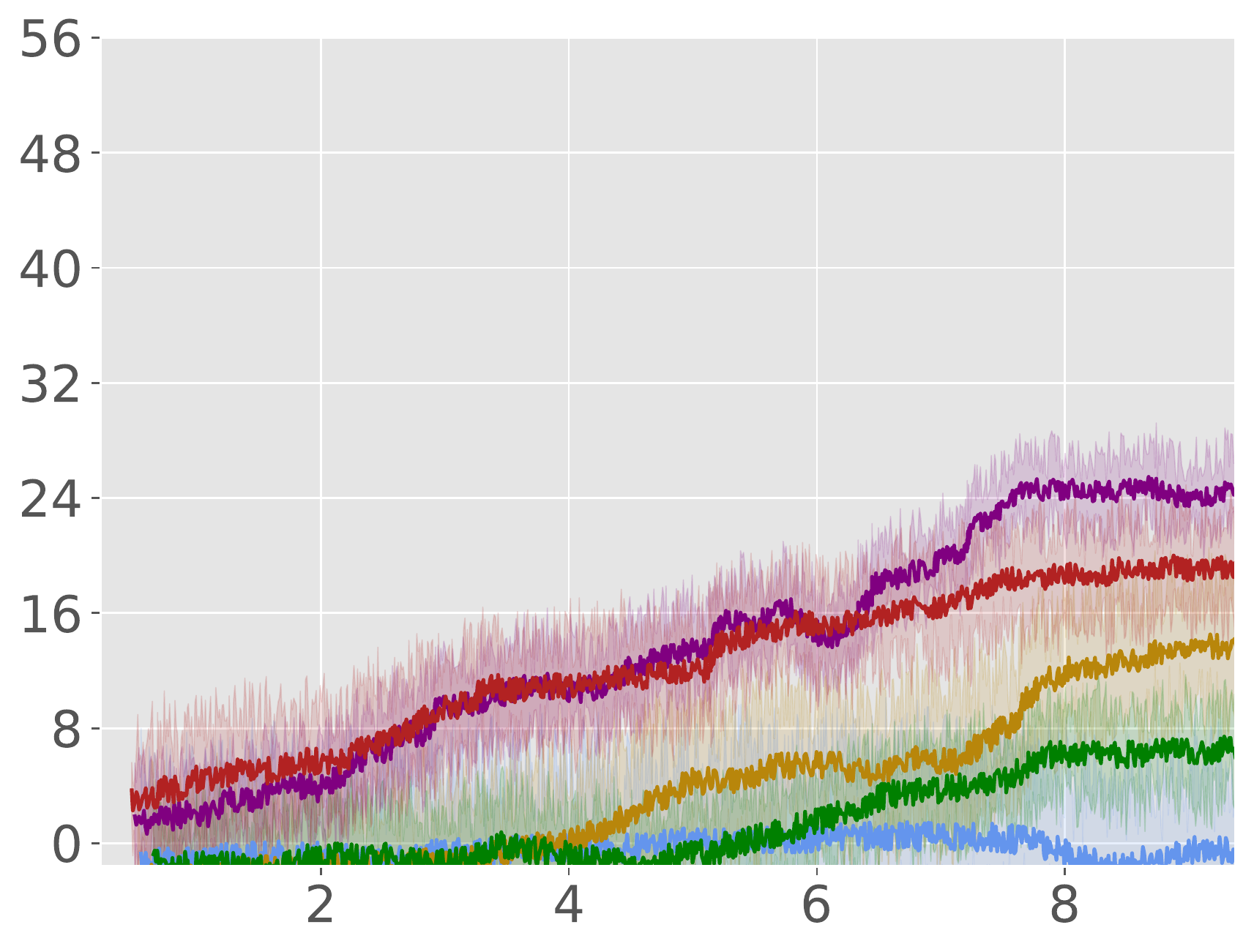}
        \label{fig:point_circle_hyper_rewards}
    }
    \subfloat[Costs]{
        \includegraphics[width=0.22\textwidth]{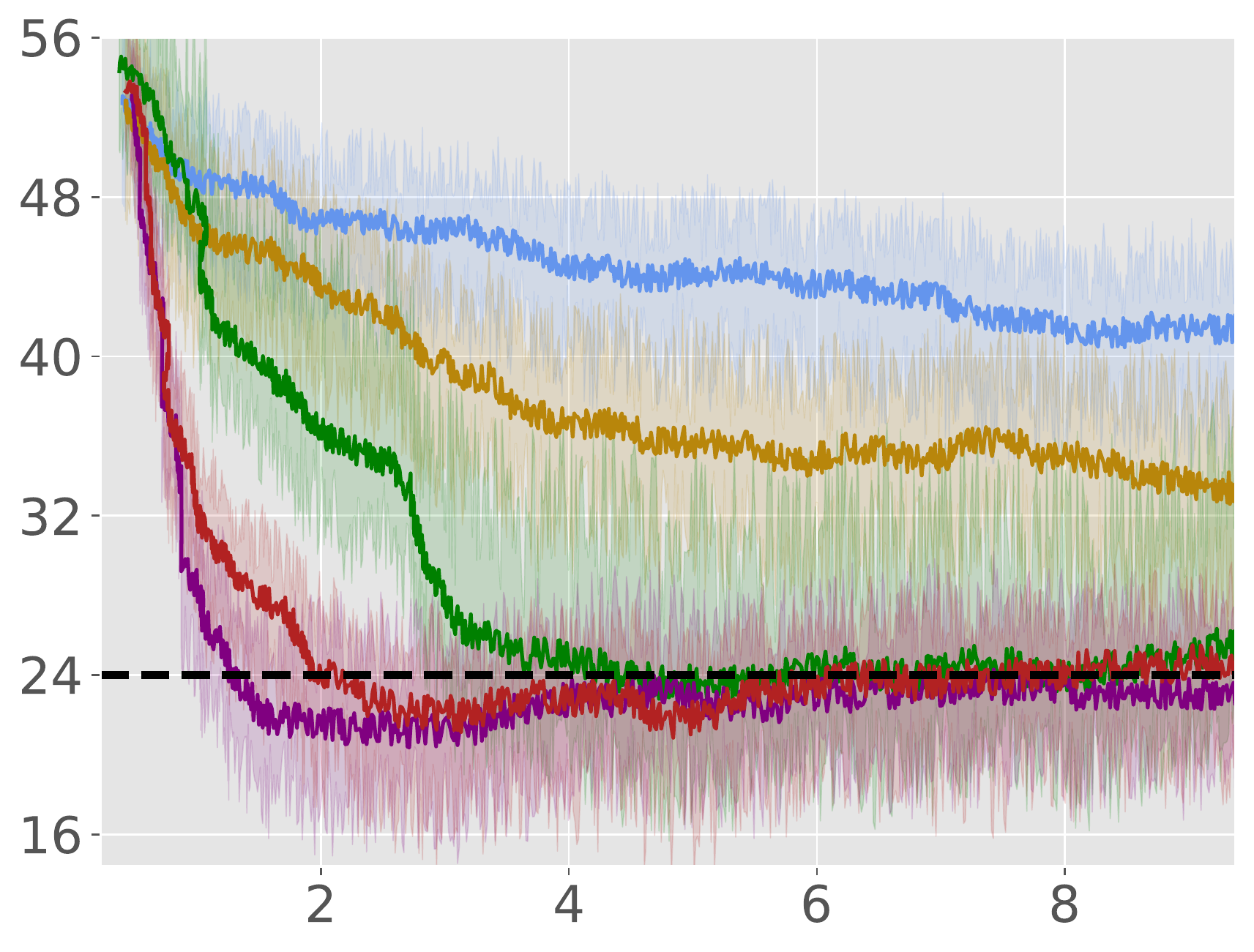}
        \label{fig:point_circle_hyper_costs}
    } 
    
     \includegraphics[width=0.45\textwidth]{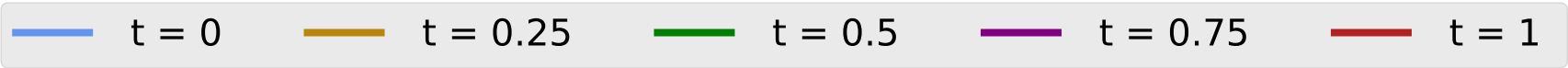}
    \caption[]{Comparison of performance of ACPO with different values of the hyperparameter $t$ in the Point-Circle environment. X-axis is iterations in $10^{4}$. See Appendix \ref{appendix:additional_results} for more details.}
    \label{fig:hyperparam_t_main}
\end{figure}

In Equation \eqref{eq:acpo_recovery} we introduced a hyperparameter $t$, which provides for an intuitive trade-off as follows: either we purely decrease the constraint violations ($t=1$), or we decrease the average-reward ($t=0$), which consequently decreases the constraint violation. The latter formulation is principled in that if we decrease rewards, we are bound to decrease constraints violation due to the nature of the environments. Figure \ref{fig:hyperparam_t_main} shows the experiments we conducted with varying $t$. With $t=1$, we obtain the same recovery scheme as that of \cite{achiam2017constrained}. Our results show that this scheme does not lead to the best performance, and that $t=0.75$ and $t=1$ perform the best across all tasks. See Appendix \ref{appendix:additional_results} for a detailed study.



\section{Conclusions}\label{sec:conclusions}

In this paper, we studied the problem of learning policies that maximize average-rewards for a given CMDP with average-cost constraints. We showed that the current algorithms with constraint violation bounds for the discounted setting do not generalize to the average setting. We then proposed a new algorithm, the Average-Constrained Policy Optimization (ACPO) that is inspired by the TRPO class of algorithms but based on theoretical sensitivity-type bounds for average-CMDPs we derive, and use in designing the algorithm. Our experimental results on a range of OpenAI Gym  environments (including some high dimensional ones) show the effectiveness of ACPO on ACMDP RL problems, as well as its superior empirical performance vis-a-vis some current alternatives. A direction for future work is implementation of ACPO to fully exploit the parallelization potential. 

\noindent\textbf{Impact:} This paper not only advances the field of RL theory and algorithms but also introduces a practical and scalable algorithm (supported by theory) that is of utility in many fields including LLMs, Diffusion Models, and robotic control. Currently, algorithms such as PPO are adapted for use simply because of lack of alternatives. 

\clearpage

\newpage

{\small
\bibliography{references}
\bibliographystyle{util/icml2024}
}

\newpage
\appendix
\onecolumn

\section{Appendix}
\label{sec:appendix}

\subsection{Proofs}


\begin{lemma}[Trivialization of Discounted Criterion Bounds]
\label{proof:cpo_bound_trivial}
Consider the policy performance bound of \cite{achiam2017constrained}, which says that for any two stationary policies $\pi$ and $\pi'$:
\begin{equation}\label
{eq:pol_imp_disc}
    J_{\gamma}(\pi')-J_{\gamma}(\pi) \geq \frac{1}{1-\gamma}\left[\E{\substack{s\sim \dpid \\ a\sim\pi'}}[\advd(s,a)] - \frac{2\gamma\epsilon^{\gamma}}{1-\gamma}\E{s\sim\dpid}\TV{\pi'}{\pi}[s]\right]
\end{equation}
where $\epsilon^{\gamma} = \max_s\left|\E{a\sim\pi'}[\advd(s,a)]\right|$. Then, the right hand side times $1-\gamma$ goes to negative infinity as $\gamma\to 1$.
\end{lemma}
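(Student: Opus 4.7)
The aim is to show that after multiplying the CPO-style lower bound by $(1-\gamma)$, the resulting quantity diverges to $-\infty$ as $\gamma \uparrow 1$. Multiplying \eqref{eq:pol_imp_disc} by $(1-\gamma)$ gives
\begin{equation*}
    (1-\gamma)\bigl[J_{\gamma}(\pi')-J_{\gamma}(\pi)\bigr] \;\geq\; \E{\substack{s\sim d_{\pi,\gamma} \\ a\sim\pi'}}[\advd(s,a)] \;-\; \frac{2\gamma\,\epsilon^{\gamma}}{1-\gamma}\,\E{s\sim d_{\pi,\gamma}}\!\bigl[\TV{\pi'}{\pi}[s]\bigr],
\end{equation*}
so it suffices to show the right-hand side tends to $-\infty$ (for a generic pair $\pi'\neq\pi$). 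The plan is to (i) verify that the first summand and the quantity $\epsilon^{\gamma}$ remain bounded as $\gamma\uparrow 1$, and (ii) observe that the prefactor $2\gamma/(1-\gamma)$ multiplied by the (strictly positive, bounded) TV-expectation blows up.

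\textbf{Step 1: bounded limit of $\advd$ and $\epsilon^\gamma$.} Using the standard Laurent expansion for the discounted value function under the unichain/ergodic assumption, we have pointwise
\begin{equation*}
    V_{\gamma}^{\pi}(s) \;=\; \frac{J(\pi)}{1-\gamma} \;+\; \vfunc(s) \;+\; o(1), \qquad \gamma\uparrow 1,
\end{equation*}
and similarly for $Q_{\gamma}^{\pi}$. Substituting into $A_{\gamma}^{\pi}(s,a) = Q_{\gamma}^{\pi}(s,a) - V_{\gamma}^{\pi}(s)$ cancels the singular $J(\pi)/(1-\gamma)$ term (via $(\gamma-1)/(1-\gamma)=-1$), leaving $A_{\gamma}^{\pi}(s,a) \to \adv(s,a)$ pointwise, uniformly on the finite state--action space. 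Combined with $d_{\pi,\gamma}\to\dpi$ (an Abelian fact from $\lim_{\gamma\uparrow1}(1-\gamma)\sum_t\gamma^t P_{\tau\sim\pi}(s_t=s)=\dpi(s)$ already cited in the preliminaries), bounded convergence yields
\begin{equation*}
    \E{\substack{s\sim d_{\pi,\gamma}\\ a\sim\pi'}}[\advd(s,a)] \longrightarrow \E{\substack{s\sim \dpi \\ a\sim\pi'}}[\adv(s,a)], \qquad \epsilon^{\gamma} \longrightarrow \bar{\epsilon} := \max_{s}\Bigl|\E{a\sim\pi'}[\adv(s,a)]\Bigr|,
\end{equation*}
both finite. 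Similarly $\E{s\sim d_{\pi,\gamma}}[\TV{\pi'}{\pi}[s]] \to \E{s\sim \dpi}[\TV{\pi'}{\pi}[s]] =: T$, a fixed nonnegative number that is strictly positive whenever $\pi'\neq\pi$ on the support of $\dpi$ (the generic case of interest; otherwise the entire bound is vacuous already).

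\textbf{Step 2: divergence of the penalty term.} Under the generic nontriviality $\bar{\epsilon} T > 0$, the second summand satisfies
\begin{equation*}
    \frac{2\gamma\,\epsilon^{\gamma}}{1-\gamma}\,\E{s\sim d_{\pi,\gamma}}\!\bigl[\TV{\pi'}{\pi}[s]\bigr] \;=\; \frac{2\bar{\epsilon}\,T + o(1)}{1-\gamma} \;\longrightarrow\; +\infty,
\end{equation*}
whereas the first summand stays bounded. Therefore the right-hand side of the rescaled inequality tends to $-\infty$, proving the claim. The only mildly delicate step is justifying the Laurent expansion and the resulting uniform convergence $A_{\gamma}^{\pi}\to\adv$; this is classical for unichain ergodic MDPs (e.g., Puterman) and is the main technical ingredient -- the rest is careful bookkeeping of which terms scale with $(1-\gamma)^{-1}$ versus $O(1)$.
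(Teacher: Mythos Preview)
Your proof is correct and follows essentially the same route as the paper: multiply by $(1-\gamma)$, use $d_{\pi,\gamma}\to d_\pi$ and $A_\gamma^\pi\to\adv$ to see the first summand and $\epsilon^\gamma$ stay bounded, and conclude that the $\gamma/(1-\gamma)$ prefactor drives the penalty term to $+\infty$. If anything, you are more careful than the paper, which simply cites $\lim_{\gamma\to1}A_\gamma^\pi=\adv$ as ``a standard result'' and does not explicitly flag the nontriviality condition $\bar\epsilon\,T>0$ that you (correctly) isolate.
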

\begin{proof}
Since $\dpid$ approaches the stationary distribution $\dpi$ as $\gamma\to 1$, we can multiply the right hand side of \eqref{eq:pol_imp_disc} by $(1-\gamma)$ and take the limit which gives us:
\begin{align*}
    &\lim_{\gamma\to 1}\left(\E{\substack{s\sim \dpid \\ a\sim\pi'}}[\advd(s,a)] \pm \frac{2\gamma\epsilon^{\gamma}}{1-\gamma}\E{s\sim\dpid}\TV{\pi'}{\pi}[s]\right) \\
    =& \E{\substack{s\sim \dpi \\ a\sim\pi'}}[\adv(s,a)] - 2\epsilon \E{s\sim\dpi}\big[\TV{\pi'}{\pi}[s]\big] \cdot \lim_{\gamma\to 1}\frac{\gamma}{1-\gamma} \\
    =& -\infty
\end{align*}
Here $\epsilon = \max_s\left|\E{a\sim\pi'}[\adv(s,a)]\right|$. The first equality is a standard result of $\lim_{\gamma\to 1}\advd(s,a) = \adv(s,a)$.
\end{proof}

\policydiff*
\begin{proof}
\label{proof:policy_diff_proof}
    We directly expand the right-hand side using the definition of the advantage function and  Bellman equation, which gives us:
\begin{align*}
    \E{\substack{s\sim d_{\pi'}\\ a\sim\pi'}}\left[\adv(s,a)\right] 
    &= \E{\substack{s\sim d_{\pi'}\\ a\sim\pi'}}\left[\qfunc(s,a) - \vfunc(s)\right] \\
    &= \E{\substack{s\sim d_{\pi'}\\ a\sim\pi' \\ s'\sim P(\cdot|s,a)}}\left[r(s,a,s') - J(\pi) + \vfunc(s') - \vfunc(s)\right] \\
    &= J(\pi') - J(\pi) + \underbrace{\E{\substack{s\sim d_{\pi'}\\ a\sim\pi'\\ s'\sim P(\cdot|s,a)}}[\vfunc(s')] - \E{s\sim d_{\pi'}}[\vfunc(s)]}_{A}
\end{align*}
Analyzing $A$, since $\dpip(s)$ is a stationary distribution:
\begin{align*}
    \E{\substack{s\sim d_{\pi'}\\ a\sim\pi'\\ s'\sim P(\cdot|s,a)}}[\vfunc(s')] &= \sum_{s}\dpip(s)\sum_{a}\pi'(a|s)\sum_{s'}P(s'|s,a)\vfunc(s') \\ &= \sum_{s}\dpip(s)\sum_{s'}P_{\pi'}(s'|s)\vfunc(s') = \sum_{s'}\dpip(s')\vfunc(s')
\end{align*}
Therefore, $A = \sum_{s'}\dpip(s')\vfunc(s') - \E{s\sim d_{\pi'}}[\vfunc(s)] = 0$. Hence, proved.
\end{proof}

\policyimpd*
\begin{proof}
\label{proof:policy_impd_proof}
\begin{align*}
    \left|J(\pi') - J(\pi) - \E{s\sim \dpi \\ a\sim\pi'}\left[\adv(s,a)\right]\right| &= \left|\E{s\sim d_{\pi'} \\ a\sim\pi'}\left[\adv(s,a)\right] - \E{s\sim \dpi\\ a\sim\pi'}\left[\adv(s,a)\right]\right| \tag{from Lemma \ref{lemma:policy_diff}} \\
    &=\left|\sum_s \E{a\sim\pi'}\left[\adv(s,a)\right]\left(\dpip(s) - \dpi(s)\right)\right| \\
    &\leq \sum_s\left|\E{a\sim\pi'}\left[\adv(s,a)\right]\left(\dpip(s) - \dpi(s)\right)\right| \\
    &\leq \max_s\left|\E{a\sim\pi'}\left[\adv(s,a)\right]\right|\norm{\dpip - \dpi}{1}{} \tag{Holder's inequality} \\
    &= 2\epsilon\TV{d_{\pi'}}{d_{\pi}}
\end{align*}
\end{proof}


\dandpi*
\begin{proof}
\label{proof:d_and_pi}
This proof takes ideas from Markov chain perturbation theory in \cite{cho2001comparison,hunter2005stationary, zhang2020average}. Firstly we state a standard result with $P_{\pi}^{\star} =\textbf{1}\dpi^T$
$$
 (d_{\pi'}-d_{\pi})^T(I-P_{\pi'}+P_{\pi'}^{\star}) = d_{\pi'}^T-d_{\pi}^T - d_{\pi'}^T + d_{\pi}^T P_{\pi'} 
    = d_{\pi}^T P_{\pi'} - d_{\pi}^T
    = d_{\pi}^T(P_{\pi'}-P_{\pi}).
$$

Denoting the fundamental matrix of the Markov chain $\Zpip =( I-P_{\pi'}+P_{\pi'}^{\star})^{-1}$ and the mean first passage time matrix $\Mpip = (I-\Zpip + E \Zpip_{\dg})D^{\pi'}$, and right multiplying the above by $(\Zpip)^{-1}$  we have, 
\begin{align}
\label{eq:d_and_pi_midstep}
d_{\pi'}^T-d_{\pi}^T = d_{\pi}^T(P_{\pi'}-P_{\pi})(I - \Mpip (D^{\pi'})^{-1}) &\implies d_{\pi'}-d_{\pi} = (I - \Mpip (D^{\pi'})^{-1})^T (P_{\pi'}^T-P_{\pi}^T)\dpi \\
\text{i.e.} \qquad \norm{d_{\pi'}-d_{\pi}}{1}{} &\leq \norm{(I - \Mpip (D^{\pi'})^{-1})^T (P_{\pi'}^T-P_{\pi}^T)\dpi}{1}{} \nonumber \tag{submultiplicative property} \\
\norm{d_{\pi'}-d_{\pi}}{1}{} &\leq \underbrace{\norm{(I - \Mpip (D^{\pi'})^{-1})}{\infty}{}}_{T_{1}} \underbrace{\norm{(P_{\pi'}^T-P_{\pi}^T)\dpi}{1}{}}_{T_{2}}    \tag{Holder's inequality}  \nonumber
\end{align}

We know that $\kappa^{\pi} = \text{Tr}(\Zpi)$ and from \cite{hunter2014mathematical}, we can write $T_{1}$ using the eigenvalues $\{\lambda_{\pi, i}\}_{i=1}^{|S|}$ of the underlying $P_{\pi}$ as
$$
T_{1} \leq \frac{1}{|S|} \sum_{i=2}^{|S|} \frac{1}{(1 - \lambda_{\pi, i})^{1/2}} \leq \max_i (1 - \lambda_{\pi, i})^{-1/2} = \sigma^{\pi} \leq \max_{\pi} \sigma^{\pi} = \sigma^{\star}.
$$

For $T_{2}$, we refer to the result by \cite{zhang2020average}, and provide the proof for completeness below.

\begin{align*}
    T_{2} &= \sum_{s'}\left|\sum_s\left(\sum_a P(s'|s,a)\pi'(a|s)-P(s'|s,a)\pi(a|s)\right)d_{\pi}(s)\right| \\
    &\leq \sum_{s',s}\left|\sum_a P(s'|s,a)(\pi'(a|s)-\pi(a|s))\right|d_{\pi}(s) \\
    &\leq \sum_{s,s',a}P(s'|s,a)\left|\pi'(a|s)-\pi(a|s)\right|d_{\pi}(s) \\
    &\leq \sum_{s,a}\left|\pi'(a|s)-\pi(a|s)\right|d_{\pi}(s) = 2 \E{s\sim d_{\pi}}[\TV{\pi'}{\pi}[s]]
\end{align*}
Combining these inequalities of $T_{1}$ and $T_{2}$, we get the desired result.
\end{proof}

\subsection{Performance and Constraint Bounds of Trust Region Approach}

Consider the trust region formulation in Equation \eqref{eq:acpo_trust}. To prove the policy performance bound when the current policy is infeasible (i.e., constraint-violating), we prove the KL divergence between $\pik$ and $\pikup$ for the KL divergence projection, along with other lemmas. We then prove our main theorem for the worst-case performance degradation.

\begin{lemma}
\label{lemma:feasible_kl_bound}
For a closed convex constraint set, if we have a constraint satisfying policy $\pik$ and the KL divergence $\E{s\sim d_{\pik}}\big[\KL{\pi_{k+1/2}}{\pik}[s]\big]$ of the `Improve' step is upper bounded by step size $\delta$, then after KL divergence projection of the `Project' step we have 
\[
\E{s\sim d_{\pik}}\big[\KL{\pikup}{\pik}[s]\big]\leq \delta.
\]
\end{lemma}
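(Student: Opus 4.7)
My plan is to recognize the `Project' step as a Bregman projection onto the closed convex constraint set $\Pi_C$, since the average KL divergence is (an expectation of) the Bregman divergence induced by the negative-entropy potential. The main tool will then be the generalized Pythagorean inequality for Bregman divergences, which guarantees that projecting onto a closed convex set brings one closer to every feasible reference point.

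Concretely, the steps I would carry out are: first, make the Project step explicit---pointwise in $s$, $\pikup(\cdot|s)$ is the Bregman projection of $\pi_{k+1/2}(\cdot|s)$ onto the state-wise restriction of $\Pi_C$. Second, since $\pik \in \Pi_C$ by the feasibility hypothesis, use $\pik(\cdot|s)$ as the reference point in the Bregman--Pythagorean inequality to obtain, for every state $s$,
\begin{equation*}
\KL{\pi_{k+1/2}}{\pik}[s] \;\geq\; \KL{\pi_{k+1/2}}{\pikup}[s] \;+\; \KL{\pikup}{\pik}[s].
\end{equation*}
Third, take the expectation under $s\sim \dpik$, which preserves the inequality. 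Finally, drop the non-negative middle term on the right and combine with the hypothesis $\E{s\sim\dpik}[\KL{\pi_{k+1/2}}{\pik}[s]] \leq \delta$ to conclude $\E{s\sim\dpik}[\KL{\pikup}{\pik}[s]] \leq \delta$.

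The main obstacle is direction-matching: the textbook Bregman--Pythagorean inequality is usually stated for projections in the first argument and would yield $\KL{\pik}{\pikup}$ rather than the $\KL{\pikup}{\pik}$ appearing in the lemma statement. Resolving this requires either interpreting the `Project' step as the Bregman projection in the argument that is consistent with how Algorithm~\ref{alg:practical_acpo} realizes it, or invoking the three-point identity for Bregman divergences to swap the two arguments before applying Pythagoras. Once this direction-matching is handled, the remainder of the argument is a one-line application of the Pythagorean inequality together with non-negativity of the discarded KL term.
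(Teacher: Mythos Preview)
Your plan is essentially the paper's approach: apply the Bregman--Pythagorean (projection) inequality with $\pik$ as the feasible reference point, drop the nonnegative cross term, and combine with the hypothesis. One correction and one comparison are worth noting.

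First, the displayed inequality you wrote is not the textbook Pythagorean form. With the projection defined as $\pikup=\argmin_{\pi\in\Pi_C}\KL{\pi}{\pi_{k+1/2}}$, the correct inequality (for any $\pik\in\Pi_C$) is
\[
\KL{\pik}{\pi_{k+1/2}}[s] \;\geq\; \KL{\pik}{\pikup}[s] \;+\; \KL{\pikup}{\pi_{k+1/2}}[s],
\]
which after taking expectation and dropping the last term yields $\E{s\sim\dpik}[\KL{\pik}{\pikup}[s]]\leq\E{s\sim\dpik}[\KL{\pik}{\pi_{k+1/2}}[s]]$. This is exactly what the paper writes, and it confirms the direction mismatch you already flagged: both the hypothesis $\E{s\sim\dpik}[\KL{\pi_{k+1/2}}{\pik}[s]]\leq\delta$ and the desired conclusion $\E{s\sim\dpik}[\KL{\pikup}{\pik}[s]]\leq\delta$ have the arguments reversed relative to what Pythagoras delivers.

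Second, the paper resolves this mismatch differently from either option you propose. Rather than reinterpreting the projection or invoking the three-point identity, the paper simply appeals to the \emph{approximate symmetry} of KL divergence in a small neighborhood ($\delta\ll 1$) to swap the arguments on both ends. This is less rigorous than what you outline but is the route the paper takes; your suggested fixes would yield a cleaner argument if carried through.
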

\begin{proof}

We make use of the fact that Bregman divergence (hence, KL divergence) projection onto the constraint set ($\in \Rbb^{d} \,, d \in \Nbb$) exists and is unique. Since $\pik$ is safe, we have $\pik$ in the constraint set, and $\pikup$ is the projection of $\pi_{k+\frac{1}{2}}$ onto the constraint set. Using the projection inequality, we have 
\begin{align*}
\E{s\sim d_{\pik}}\big[\KL{\pik}{\pikup}[s]\big] + \E{s\sim d_{\pik}}\big[\KL{\pikup}{\pi_{k+\frac{1}{2}}}[s]\big] \leq \E{s\sim d_{\pik}}\big[\KL{\pik}{\pi_{k+\frac{1}{2}}}[s]\big] & \\
\implies \E{s\sim d_{\pik}}\big[\KL{\pik}{\pikup}[s]\big] \leq \E{s\sim d_{\pik}}\big[\KL{\pik}{\pi_{k+\frac{1}{2}}}[s]\big] \leq \delta \tag{$\KL{\cdot}{\cdot} \geq 0$}.
\end{align*}
Since KL divergence is asymptotically symmetric when updating the policy within a local neighbourhood ($\delta << 1$), we have
$$
\E{s\sim d_{\pik}}\big[\KL{\pikup}{\pik}[s]\big] \leq 
\E{s\sim d_{\pik}}\big[\KL{\pi_{k+\frac{1}{2}}}{\pik}[s]\big] \leq \delta.
$$
\end{proof}

\begin{lemma}
\label{lemma:infeasible_kl_bound}

For a closed convex constraint set, if we have a constraint violating policy $\pik$ and the KL divergence $\E{s\sim d_{\pik}}\big[\KL{\pi_{k+1/2}}{\pik}[s]\big]$ of the first step is upper bounded by step size $\delta$, then after KL divergence projection of the second step we have 
\[
\E{s\sim d_{\pik}}\big[\KL{\pikup}{\pik}[s]\big]\leq \delta+V_{max},
\]
where $V_{max} = \max_{i} \alpha_{i}\beta_{i}^{2}, \,  \beta_{i} = [J_{C_{i}}(\pik)-l_{i}]_{+}$, $\alpha_{i} = \frac{1}{2a_{i}^T H^{-1}a_{i}},$ with
$a_{i}$ as the gradient of the cost advantage function corresponding to constraint $C_{i}$, and $H$ as the Hessian of the KL divergence constraint. \footnote{For any $x \in \Rbb, \, [x]_{+} := \max(0, x)$}.
\end{lemma}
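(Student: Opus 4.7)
The plan is to adapt the proof strategy of Lemma~\ref{lemma:feasible_kl_bound}, replacing the ``safe anchor'' $\pik$ by an auxiliary surrogate policy $\pi^{\star}$ that does lie in the constraint set; the additional $V_{\max}$ term in the bound will precisely capture the cost of how far $\pik$ has been pushed outside $C$. Throughout, I would work in the small-$\delta$ regime where the state-averaged KL is well approximated by the quadratic Fisher form $\KL{\pi_{\theta}}{\pi_{\theta_k}} \approx \tfrac{1}{2}(\theta-\theta_k)^T H (\theta-\theta_k)$ and is locally symmetric --- the same approximation already invoked at the end of the proof of Lemma~\ref{lemma:feasible_kl_bound}.

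Concretely, let $\pi^{\star}$ (with parameters $\theta^{\star}$) be the Bregman/KL projection of $\pik$ onto the linearized constraint set $C=\{\theta:c_i+a_i^T(\theta-\theta_k)\leq 0\;\forAll i\}$ driving the project step; existence and uniqueness follow from closedness and convexity of $C$ together with strict convexity of KL. I would then proceed in three moves. First, bound $\E{s\sim d_{\pik}}[\KL{\pi^{\star}}{\pik}[s]] \leq V_{\max}$ by noting that, in the quadratic approximation, minimizing $\tfrac{1}{2}(\theta-\theta_k)^T H (\theta-\theta_k)$ subject to the binding halfspace $a_i^T(\theta-\theta_k)\leq -c_i$ has closed-form solution $\theta-\theta_k=-\tfrac{c_i}{a_i^T H^{-1}a_i}H^{-1}a_i$ with resulting KL value $\alpha_i\beta_i^2\leq V_{\max}$. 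Second, apply the Bregman projection inequality of the project step with test point $\pi^{\star}\in C$, which, combined with nonexpansivity of the $H$-projection $\norm{\theta_{k+1}-\theta^{\star}}{H}{} \leq \norm{\theta_{k+1/2}-\theta_k}{H}{} \leq \sqrt{2\delta}$, gives $\E{s\sim d_{\pik}}[\KL{\pikup}{\pi^{\star}}[s]] \leq \delta$ via local symmetry. Third, invoke the Pythagorean decomposition in the $H$-norm,
\[
\tfrac{1}{2}\norm{\theta_{k+1}-\theta_k}{H}{2} \leq \tfrac{1}{2}\norm{\theta_{k+1}-\theta^{\star}}{H}{2} + \tfrac{1}{2}\norm{\theta^{\star}-\theta_k}{H}{2} \leq \delta + V_{\max},
\]
and conclude via symmetry of KL.

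The main obstacle is justifying the Pythagorean decomposition when the projection $\theta^{\star}=P_C(\theta_k)$ lands on a face formed by several simultaneously active constraints. When only a single constraint binds --- the regime of Eq.~\eqref{eq:acpo_recovery} and the paper's main experimental setting --- orthogonality is exact because $\theta^{\star}-\theta_k$ is proportional to $H^{-1}a_i$ while $\theta_{k+1}-\theta^{\star}$ lies in the tangent hyperplane $\{\theta: a_i^T\theta = 0\}$, so the cross term $\langle \theta_{k+1}-\theta^{\star},\theta^{\star}-\theta_k\rangle_H$ vanishes identically. With multiple active constraints, the variational inequality for $\theta^{\star}$ only yields $\langle \theta_{k+1}-\theta^{\star},\theta^{\star}-\theta_k\rangle_H \geq 0$, which points the wrong way; this is handled by defining $V_{\max}$ as the worst-case single-constraint distance $\max_i \alpha_i\beta_i^2$ and reducing to the single-binding-constraint scenario that drives the infeasibility.
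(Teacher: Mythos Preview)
Your route is genuinely different from the paper's. You anchor on $\pi^{\star}=P_C(\pik)$, use nonexpansivity of the $H$-projection to control $\norm{\theta_{k+1}-\theta^{\star}}{H}{}$, and then attempt a Pythagorean split. The paper instead introduces an auxiliary \emph{set}: the sublevel set $L_{\pik}=\{\pi:\,J_{C_i}(\pik)+\E{}[\advctk]\leq J_{C_i}(\pik)\}$, which by construction contains $\pik$. It projects $\pi_{k+1/2}$ onto $L_{\pik}$ to get $\pikup^{L}$, applies Lemma~\ref{lemma:feasible_kl_bound} to bound $\KL{\pikup^{L}}{\pik}\leq\delta$, and then uses the Bregman three-point \emph{identity} on $(\pik,\pikup,\pikup^{L})$. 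The residual term is $\KL{\pikup}{\pikup^{L}}$, the gap between the projections of the \emph{same} point $\pi_{k+1/2}$ onto two parallel shifted halfspace systems, which under second-order approximation yields $\alpha_i\beta_i^2$ directly.

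Your Step~3 has a real gap. The obtuse-angle property of convex projection says that for $\theta^{\star}=P_C(\theta_k)$ and any $y\in C$ one has $\langle y-\theta^{\star},\theta^{\star}-\theta_k\rangle_H\geq 0$, hence
\[
\norm{\theta_{k+1}-\theta_k}{H}{2}\;\geq\;\norm{\theta_{k+1}-\theta^{\star}}{H}{2}+\norm{\theta^{\star}-\theta_k}{H}{2},
\]
which is the reverse of what you write. You acknowledge this and claim equality in the single-constraint case because $\theta_{k+1}-\theta^{\star}$ lies in the tangent hyperplane; but that is true only when $\theta_{k+1/2}\notin C$, so that the projection actually activates the constraint. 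If $\theta_{k+1/2}\in C$ then $\theta_{k+1}=\theta_{k+1/2}$ is interior, the cross term is strictly positive, and your displayed inequality fails (the conclusion still holds trivially since then $\KL{\pikup}{\pik}\leq\delta$, but you never make this case distinction). More seriously, the multi-constraint fix you propose---``reduce to the single-binding-constraint scenario''---is not an argument: the $H$-distance from $\theta_k$ to $\bigcap_i C_i$ can strictly exceed $\max_i\alpha_i\beta_i^2$, and even when a single constraint is active at $\theta^{\star}$ there is no reason the same one is active at $\theta_{k+1}$, so orthogonality is lost. The paper sidesteps the sign issue entirely by working with an identity rather than an inequality, and handles the multi-constraint residual via Dykstra's alternating projection.
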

\begin{proof}

Let the sublevel set of cost constraint function for the current infeasible policy $\pik$ be given as:
\[
L_{\pik}=\{\pi~|~J_{C_{i}}(\pi)+ \E{\substack{s\sim d_{\pik} \\ a\sim \pi}}[\wb{A}^{\pik}_{C_{i}}(s,a)] \leq J_{C_{i}}(\pik) \forAll i\}.
\]
This implies that the current policy $\pik$ lies in $L_{\pik}$. The constraint set onto which $\pi_{k+\frac{1}{2}}$ is projected onto is given by: $\{\pi~|~J_{C_{i}}(\pik)+ \E{\substack{s\sim d_{\pik} \\ a\sim \pi}}[\wb{A}^{\pik}_{C_{i}}(s,a)]\leq l_{i} \forAll i \}.$ Let $\pikup^L$ be the projection of $\pi_{k+\frac{1}{2}}$ onto $L_{\pik}.$

Note that the Bregman inequality of Lemma \ref{lemma:feasible_kl_bound} holds for any convex set in $\Rbb^{d} \,, d \in \Nbb$. This implies $\E{s\sim d_{\pik}}\big[\KL{\pikup^L}{\pik}[s]\big] \leq \delta$ since $\pik$ and $\pikup^L$ are both in $L_{\pik}$, which is also convex since the constraint functions are convex. Using the Three-point Lemma \footnote{For any $\phi$, the Bregman divergence identity: $D_\phi(x,y)+D_\phi(y,z)=D_\phi(x,z)+<\nabla \phi(z)-\nabla \phi(y),x-y>$}, for polices $\pik, \pikup,$ and $\pikup^L$, with $\varphi(\textbf{x}):=\sum_i x_i\log x_i$, we have

\begin{align}
\delta \geq  \E{s\sim d_{\pik}}\big[\KL{\pikup^L}{\pik})[s]\big]&=\E{s\sim d_{\pik}}\big[\KL{\pikup}{\pik}[s]\big] \nonumber\\ 
&-\E{s\sim d_{\pik}}\big[\KL{\pikup}{\pikup^L}[s]\big] \nonumber \\
&+\E{s\sim d_{\pik}}\big[(\nabla\varphi(\pik)-\nabla\varphi(\pikup^{L}))^T(\pikup-\pikup^L)[s]\big] \nonumber \\
\Rightarrow \E{s\sim d_{\pik}}\big[\KL{\pikup}{\pik}[s]\big]&\leq \delta + \underbrace{\E{s\sim d_{\pik}}\big[\KL{\pikup}{\pikup^L}[s]\big]}_{T_{1}} \nonumber \\
&- \underbrace{\E{s\sim d_{\pik}}\big[(\nabla\varphi(\pik)-\nabla\varphi(\pikup^L))^T(\pikup-\pikup^L)[s]\big]}_{T_{2}}. 
\end{align}

If the constraint violations of the current policy $\pik$ are small, i.e., $J_{C_{i}}(\pik)-l_{i} = b_{i} $ is small for all $i$, then $T_{1}$ can be approximated by a second order expansion. We analyze $T_{1}$ for any constraint $C_{i}$ and then bound it over all the constraints. As before we overload the notation with $\pik = \pi_{\theta_{k}} = \theta_{k}$ to write. For any constraint $C_{i}$, we can write $T^{i}_{1}$ as the expected KL divergence if projection was onto the constraint set of $C_{i}$ i.e.

\begin{align*}
    T^{i}_{1} \approx \frac{1}{2}(\pikup-\pikup^L)^{T} H(\pikup-\pikup^L) & =\frac{1}{2} \Big(\frac{\beta_{i}}{a_{i}^T H^{-1}a_{i}} H^{-1}a_{i}\Big)^T H\Big(\frac{\beta_{i}}{a_{i}^T H^{-1}a_{i}}H^{-1}a_{i}\Big) \\ &= \frac{\beta_{i}^2}{2a_{i}^T H^{-1}a_{i}} = \alpha_{i} \beta_{i}^2,
\end{align*}

where the second equality is a result of the trust region guarantee (see \cite{schulman2015trust} for more details). Finally we invoke the projection result from \cite{achiam2017advanced} which uses Dykstra's Alternating Projection algorithm from \cite{tibshirani2017dykstra} to bound this projection, i.e., $T_{1} \leq \max_{i} T^{i}_{1} \approx \max_{i} \alpha_{i} \beta_{i}^2$.

And since $\delta$ is small, we have $\nabla\varphi(\pik)-\nabla\varphi(\pikup^{L})\approx 0$ given $s$. Thus, $T_{2} \approx 0$. Combining all of the above, we have $\E{s\sim d_{\pik}}\big[\KL{\pikup}{\pik}[s]\big]\leq \delta+V_{max}.$
\end{proof}

\trustdegradationviolation*
\begin{proof}
\label{proof:trust_proof}

Since $\avKL{\polk}{\polk}=0$, $\polk$ is feasible. The objective value is 0 for $\pol=\polk$. The bound follows from Equation \eqref{eq:avg_policy_imp} and Equation \eqref{eq:tv-kl} where the average KL i.e. $\E{s\sim d_{\pik}}\big[\KL{\pikup}{\pik}[s]\big]$ is bounded by $\delta+V_{max}$ from Lemma \ref{lemma:infeasible_kl_bound}.

Similar to Corollary \ref{cor:thm:avg_constraint_imp}, expressions for the auxiliary cost constraints also follow immediately as the second result.

\remark{Remark} If we look at proof as given by \cite{zhang2020average} in Section 5 of their paper, with the distinction now that $\delta$ is replaced by $\delta + V_{max}$, we have the same result. Our worse bound is due to the constrained nature of our setting, which is intuitive in the sense that for the sake of satisfying constraints, we undergo a worse worst-case performance degradation.

\end{proof}

\subsection{Approximate ACPO}
\label{appendix:approx_acpo}

\subsubsection{Policy Recovery Routine}
As described in Section \ref{sec:recovery}, we need a recovery routine in case the updated policy $\pi_{k+1/2}$ is not approximate constraint satisfying. In this case, the optimization problem is inspired from a simple trust region approach by \cite{schulman2015trust}. Since we only deal with one constraint in the practical implementation of ACPO, the recovery rule is obtained by solving the following problem:

$$
\begin{array}{ll}
\min _{\theta} & c+a^{T}\left(\theta-\theta_{k}\right) \\
\text { s.t. } & \frac{1}{2}\left(\theta-\theta_{k}\right)^{T} H\left(\theta-\theta_{k}\right) \leq \delta .
\end{array}
$$

Let $x=\theta-\theta_{k}$, then the dual function $L(x, \lambda)$ is given by: $L(x, \lambda)=c+a^{T} x+\lambda\left(\frac{1}{2} x^{T} H x-\delta\right)$. Now,

$$
\frac{L}{\partial x}=a+\lambda(H x)=0 \Longrightarrow x=-\frac{1}{\lambda} H^{-1} a .
$$

$x$ obtained above should satisfy the trust-region constraint:

$$
\begin{aligned}
\frac{1}{2}\left(-\frac{1}{\lambda} H^{-1} a\right)^{T} H\left(-\frac{1}{\lambda} H^{-1} a\right) & \leq \delta \\
\Longrightarrow \quad \frac{1}{2} \cdot \frac{1}{\lambda^{2}} \cdot a^{T} H^{-1} a & \leq \delta \\
\Longrightarrow \sqrt{\frac{a^{T} H^{-1} a}{2 \delta}} & \leq \lambda .
\end{aligned}
$$

Therefore, the update rule in case of infeasibility takes the form $ \theta=\theta_{k}-\sqrt{\frac{2 \delta}{a^{T} H^{-1} a}} H^{-1} a$. We augment this rule with the gradient of the reward advantage function as well, so the final recovery is 
$$
\theta_{k+1/2} = \theta_{k} - \sqrt{2\delta} \bigg[t \cdot \frac{H^{-1} a}{\sqrt{a^T H^{-1} a}} + (1-t) \cdot \frac{H^{-1} g}{\sqrt{g^T H^{-1} g}} \bigg] \quad ; \quad t \in [0,1]
$$

\subsubsection{Line Search}
Because of approximation error, the proposed update may not satisfy the constraints in Eq. \eqref{eq:acpo_trust}. Constraint satisfaction is enforced via line search, so the final update is $$ \theta_{k+1} = \theta_k + s^j \left(\theta_{k+1/2} - \theta_k\right),$$ where $s \in (0,1)$ is the backtracking coefficient and $j \in \{0, ..., L\}$ is the smallest integer for which $\pi_{k+1}$ satisfies the constraints in Equation \ref{eq:acpo_trust}. Here, $L$ is a finite backtracking budget; if no proposed policy satisfies the constraints after $L$ backtracking steps, no update occurs.

\subsection{Practical ACPO}
\label{appendix:practical_acpo}

As explained in Section \ref{sec:acpo_implementation}, we use the below problem formulation, which uses first-order Taylor approximation on the objective and second-order approximation on the KL constraint \footnote{The gradient and first-order Taylor approximation of $\avKL{\pol}{\polk}$ at $\theta=\theta_k$ is zero.} around $\theta_k$, given small $\delta$: 
\begin{equation}
\begin{aligned}
\max_{\theta} \;\;\;&  g^T (\theta - \theta_k)  \\
\text{s.t. } \;\;\;& c_i + a_i^T (\theta - \theta_k) \leq 0, \forAll i \qquad ; \qquad  \tfrac{1}{2} (\theta - \theta_k)^T H (\theta - \theta_k) \leq \delta.
\end{aligned}
\end{equation}
where
\begin{align*}
    g &:= \E{\substack{s\sim\dpolk \\ a\sim\polk}}\left[\grad\log\pol(a|s)|_{\theta=\theta_k}\wb{A}^{\polk}(s,a)\right] \qquad &; \qquad  c_i &:= J_{C_i}(\theta_k) - l_i  \forAll i \\
    a_i &:= \E{\substack{s\sim\dpolk \\ a\sim\polk}}\left[\grad\log\pol(a|s)|_{\theta=\theta_k}\wb{A}^{\polk}_{C_{i}}(s,a)\right] \qquad &; \qquad  H &:= \E{\substack{s\sim\dpolk \\ a\sim\polk}}\left[\grad\log\pol(a|s)|_{\theta=\theta_k}\grad\log\pol(a|s)|_{\theta=\theta_k}^T\right] 
\end{align*}

Similar to the work of \cite{achiam2017constrained}, $g$, $a_{i}$, and $H$ can be approximated using samples drawn from the policy $\polk$. The Hessian $H$ is identical to the Hessian $H$ used by \cite{achiam2017constrained} and \cite{zhang2020average}. However, the definitons $g$ and $a_{i}$'s are different since they include the average reward advantage functions, $\wb{A}^{\polk}(s,a) =   \wb{Q}^{\polk}(s,a) -  \wb{V}^{\polk}(s)$.

Since rewards and cost advantage functions can be approximated independently, we use the framework of \cite{zhang2020average} to do so. We describe the process of estimation of rewards advantage function, and the same procedure can be used for the cost advantage functions as well. Specifically, first approximate the average-reward bias $\wb{V}^{\polk}(s)$ and then use a one-step TD backup to estimate the action-bias function. Concretely, using the average reward Bellman equation gives
\begin{equation}\label{eq:bellman_objective}
    \wb{A}^{\polk}(s,a) =  r(s,a) - J(\polk) + \E{s'\sim P(\cdot|s,a)}\left[\wb{V}^{\polk}(s')\right] - \wb{V}^{\polk}(s)
\end{equation}

This expression involves the average-reward bias $\wb{V}^{\polk}(s)$, which we can approximated using the standard critic network $\wb{V}_{\phi_k}(s)$. However, in practice we use the average-reward version of the Generalized Advantage Estimator (GAE) from \cite{schulman2016high}, similar to \cite{zhang2020average}. Hence, we refer the reader to that paper for detailed explanation, but provide an overview below for completeness.

Let $ \hat{J}_{\pi} = \frac{1}{N}\sum_{t=1}^N r_{t} $ denote the estimated average reward. The Monte Carlo target for the average reward value function is $\vtarg_t = \sum_{t'=t}^N (r_{t} - \hat{J}_{\pi})$ and the bootstrapped target is $\vtarg_t = r_{t} - \hat{J}_{\pi} +\vphi(s_{t+1})$.

The Monte Carlo and Bootstrap estimators for the average reward advantage function are:
\begin{align*}
    \hat{A}_{\text{MC}}^{\pi}(s_{t}, a_{t}) = \sum_{t'=t}^N (r_{t} - \hat{J}_{\pi}) -  \vphi(s_{t}) \qquad ; \qquad 
    \hat{A}_{\text{BS}}^{\pi}(s_{t}, a_{t}) = r_{i,t}-\hat{J}_{\pi} + \vphi(s_{t+1}) - \vphi(s_{t})
\end{align*}
We can similarly extend the GAE to the average reward setting:
\begin{equation}\label{eq:avg_GAE}
    \hat{A}_{\text{GAE}}(s_{t},a_{t}) = \sum_{t'=t}^N \lambda^{t'-t}\delta_{t'} \qquad , \qquad \delta_{t'} = r_{t'} - \hat{J}_{\pi}+ \vphi(s_{t'+1}) - \vphi(s_{t'}).
\end{equation}

and set the target for the value function to $ \vtarg_t = r_t - \hat{J}_{\pi}+ \vphi(s_{t+1}) + \sum_{t'=t+1}^N \lambda^{t'-t}\delta_{t'}$.

\subsection{Experimental Details}
\label{appendix:experimental_details}

For detailed explanation of Point-Circle, Point-Gather, Ant-Circle, and Ant-Gather tasks, please refer to \cite{achiam2017constrained}. For detailed explanation of Bottleneck and Grid tasks, please refer to \cite{vinitsky2018benchmarks}. For the simulations in the Gather and Circle tasks, we use neural network baselines with the same architecture and activation functions as the policy networks. For the simulations in the Grid and Bottleneck tasks, we use linear baselines. For all experiments we use Gaussian neural policies whose outputs are the mean vectors and variances are separate parameters to be learned. Seeds used for generating evaluation trajectories are different from those used for training.

For comparison of different algorithms, we make use of CPO, PCPO, ATRPO, and PPO implementations taken from \url{https://github.com/rll/rllab} and \url{https://github.com/openai/safety-starter-agents}. Even the hyperparameters are selected so as to showcase the best performance of other algorithms for fair comparison. The choice of the hyperparameters given below is inspired by the original papers since we wanted to understand the performance of the average reward case. 

We use settings which are common in all open-source implementations of the algorithms, such as normalizing the states by the running mean and standard deviation before being fed into the neural network and similarly normalizing the advantage values (for both rewards and constraints) by their batch means and standard deviations before being used for policy updates. Table \ref{tab:hyperparameters} summarizes the hyperparameters below.

\begin{table}[ht]
    \centering
    \caption{Hyperparameter Setup}
    \vskip 0.15in
    \begin{tabular}{l l l}
    \toprule
       Hyperparameter & PPO/ATRPO & CPO/PCPO/ACPO  \\
        \midrule
    No. of hidden layers & 2 & 2  \\
      Activation & $\tanh$ & $\tanh$ \\
      Initial log std & -0.5 & -1 \\
      Batch size & 2500 & 2500 \\
      GAE parameter (reward) & 0.95 & 0.95  \\ 
      GAE parameter (cost) & N/A & 0.95  \\
      Trust region step size $\delta$ & $10^{-4}$ & $10^{-4}$ \\
      Learning rate for policy & $2\times 10^{-4}$ & $2\times 10^{-4}$ \\
      Learning rate for reward critic net & $2\times 10^{-4}$ & $2\times 10^{-4}$  \\
      Learning rate for cost critic net & N/A & $2\times 10^{-4}$  \\
       Backtracking coeff.  & 0.75 & 0.75  \\
       Max backtracking iterations & 10 & 10 \\
       Max conjugate gradient iterations & 10 & 10 \\
       Recovery regime parameter $t$ & N/A & 0.75 \\
       \bottomrule
    \end{tabular}
    \label{tab:hyperparameters}
\end{table}

For the Lagrangian formulation of ATRPO and PPO, note that the original papers do not provide any blueprint for formulating the Lagrangian, and even CPO and PCPO use \emph{unconstrained} TRPO for benchmarking. However, we feel that this is unfair to these algorithms as they can possibly perform better with a Lagrangian formulation in an average-reward CMDP setting. To this extent, we introduced a Lagrangian parameter $\ell \in [0,1]$ that balances the rewards and constraints in the final objective function. More specifically, Equation \eqref{eq:acpo_approx} for a single constraint now becomes

\begin{equation}
\label{eq:lagrangian_balanced}
\begin{aligned}
\max_{\theta} \;\;\;&  (1-\ell) g^T (\theta - \theta_k) - \ell \big[ \big(c_1 + a_1^T (\theta - \theta_k)\big) + \big( \tfrac{1}{2} (\theta - \theta_k)^T H (\theta - \theta_k) - \delta \big)   \big].
\end{aligned}
\end{equation}

\paragraph{Note.} The authors of the ATRPO and PPO do not suggest any principled approach for finding an optimal $\ell$. Hence, the choice of the Lagrangian parameter $\ell$ is completely empirical and is selected such that these algorithms achieve maximum rewards while satisfying the constraints. Also see in Figure \ref{fig:rewards_costs_comparison}, for Ant-Gather, Bottleneck, and Grid environments, where the constraints cannot be satisfied for \emph{any} value of $\ell$, we include the results for a specific value of $\ell$ for illustrative purposes, as detailed in Table \ref{tab:lagrangian_parameters}.

\begin{table}[ht]
    \centering
    \caption{Lagrangian parameter $\ell$ for ATRPO and PPO}
    \vskip 0.15in
    \begin{tabular}{l c c c c c}
    \toprule
       Algorithm & Point-Gather & Ant-Circle & Ant-Gather & Bottleneck & Grid  \\
        \midrule
    ATRPO & 0.50 & 0.60 & 0.45 & 0.50 & 0.45 \\
      PPO & 0.55 & 0.50 & 0.50 & 0.50 & 0.60 \\
       \bottomrule
    \end{tabular}
    \label{tab:lagrangian_parameters}
\end{table}

\subsection{Experimental Addendum}
\label{appendix:additional_results}

\subsubsection{Environments}
\label{appendix:environments}

All environments tested on are illustrated in Figure \ref{fig:env_overview}, along with a detailed description of each.

\begin{figure}[h]
    \centering
    \subfloat[Circle]{
        \includegraphics[width=0.2\textwidth]{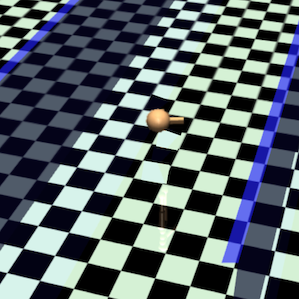}
        \label{fig:circle}
    }
    \subfloat[Gather]{
        \includegraphics[width=0.2\textwidth]{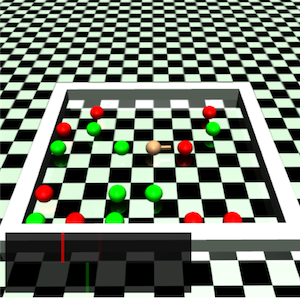}
        \label{fig:gather}
    }
    \subfloat[Grid]{
        \includegraphics[width=0.2\textwidth]{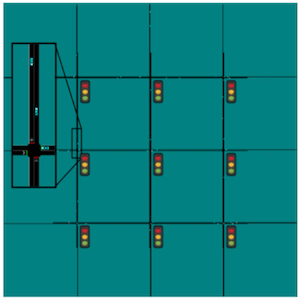}
        \label{fig:grid}
    }
    \subfloat[Bottleneck]{
        \includegraphics[width=0.2\textwidth, angle =90]{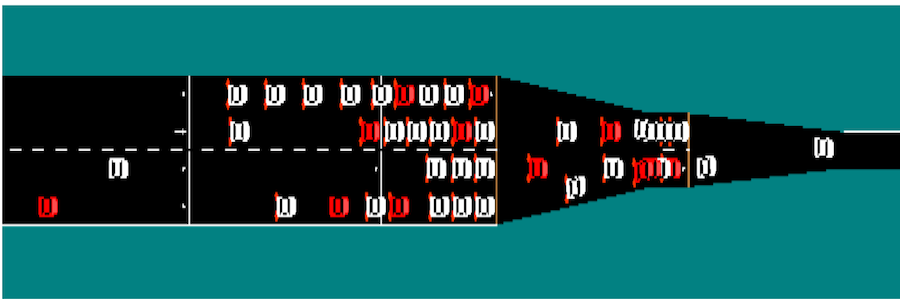}
        \label{fig:bottleneck}
    }
    \caption[]{The Circle, Gather, Grid, and Bottleneck tasks. (a) Circle: The agent is rewarded for moving in a specified circle but is penalized if the diameter of the circle is larger than some value as in \cite{achiam2017constrained}.  (b) Gather: The agent is rewarded for collecting the green balls while penalized to gather red balls as in \cite{achiam2017constrained}. (c) Grid: The agent controls traffic lights in a 3x3 road network and is rewarded for high traffic throughput but is constrained to let lights be red for at most 5 consecutive seconds as in \cite{vinitsky2018benchmarks}. (d) Botteneck: The agent controls vehicles (red) in a merging traffic situation and is rewarded for maximizing the number of vehicles that pass through but is constrained to ensure that white vehicles (not controlled by agent) have ``low'' speed for no more than 10 seconds as in \cite{vinitsky2018benchmarks}.}
    \label{fig:env_overview}
\end{figure}

\subsubsection{Learning Curves}

Due to space restrictions, we present the learning curves for the remaining environments in Figure \ref{fig:appendix_rewards_costs_comparison}.

\begin{figure*}[ht!]
\centering
    \hspace{-6cm} Average Rewards: \newline

    {
        \includegraphics[width=0.3\textwidth]{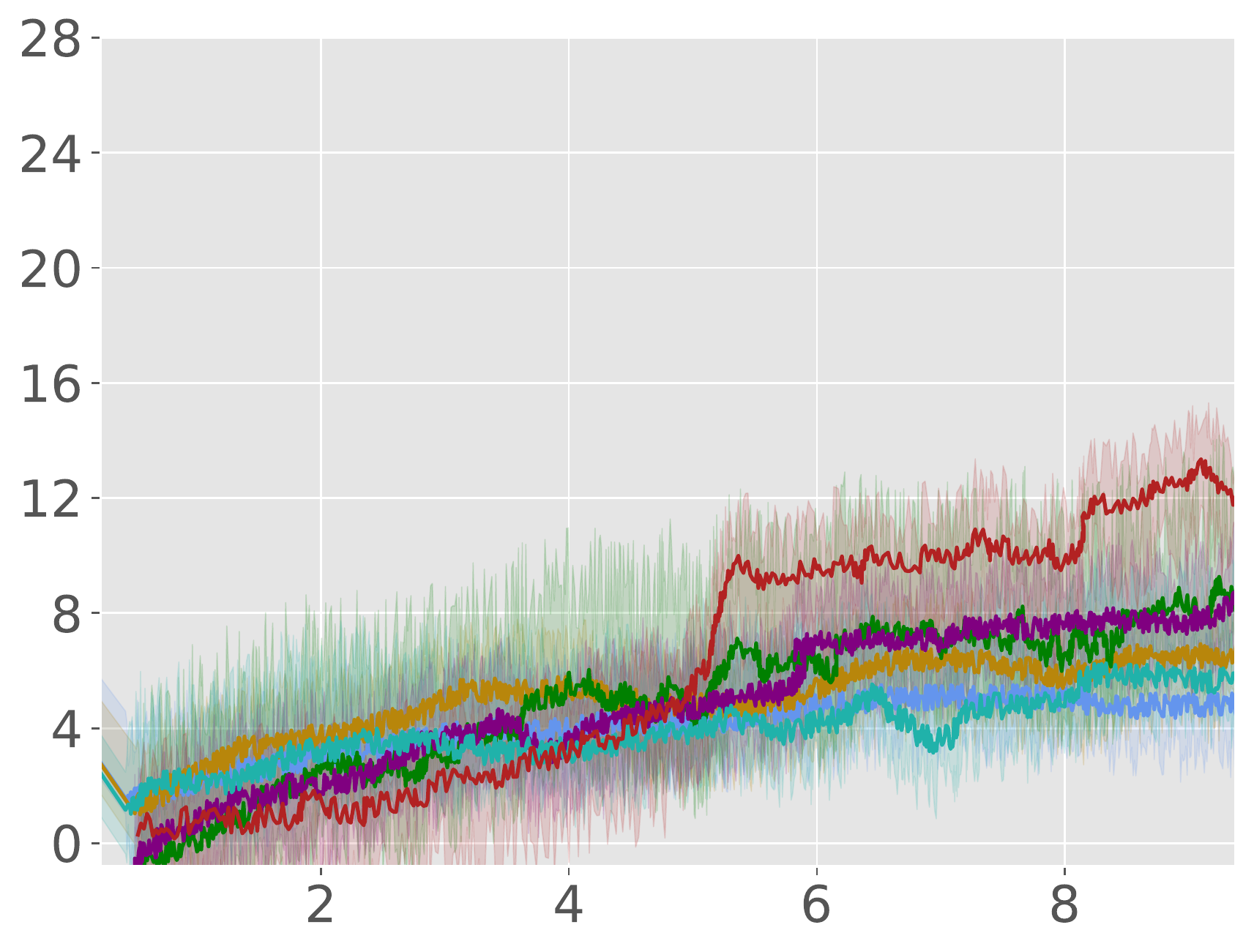}
        \label{fig:point_gather_rewards}
    }
    {
        \includegraphics[width=0.3\textwidth]{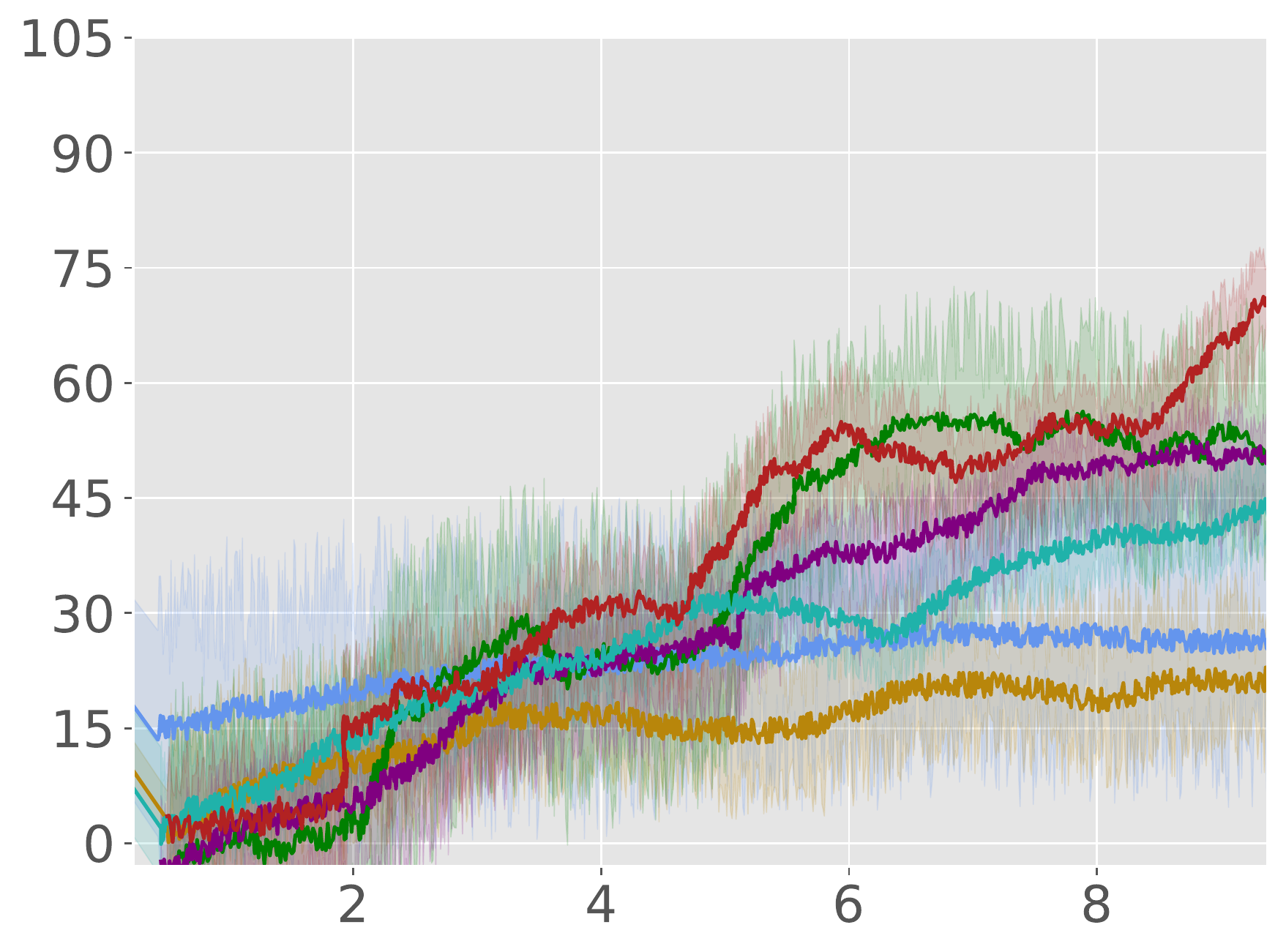}
        \label{fig:ant_circle_rewards}
    }

    \rule{\linewidth}{0.5pt}
    \hspace{2cm} Average Constraint values: \hspace{2.5cm} \includegraphics[width=0.55\textwidth]{images/rewards_costs_legend.JPG} \newline
    \subfloat[Point Gather]{
        \includegraphics[width=0.3\textwidth]{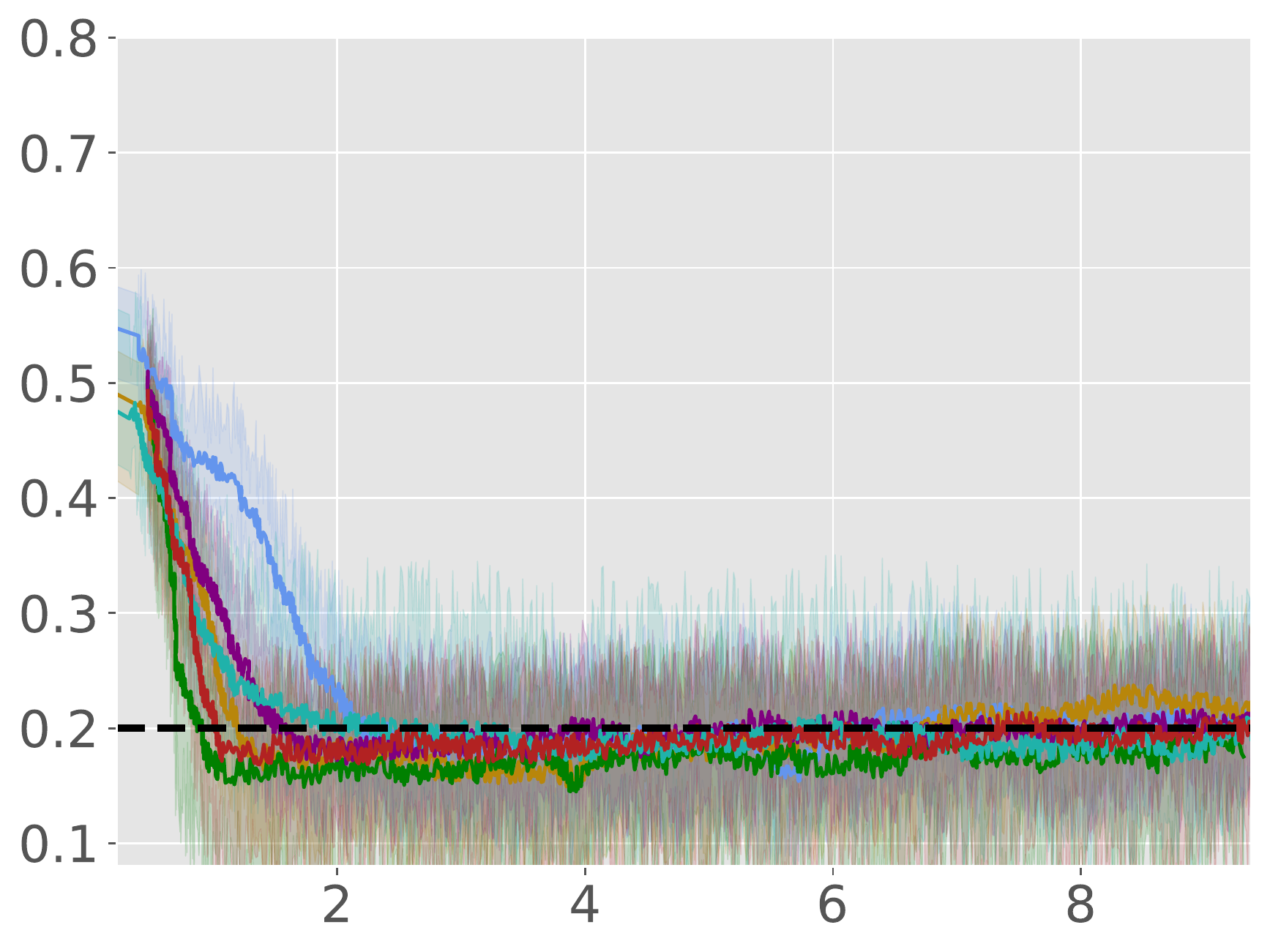}
        \label{fig:point_gather_costs}
    }
    \subfloat[Ant Circle]{
        \includegraphics[width=0.3\textwidth]{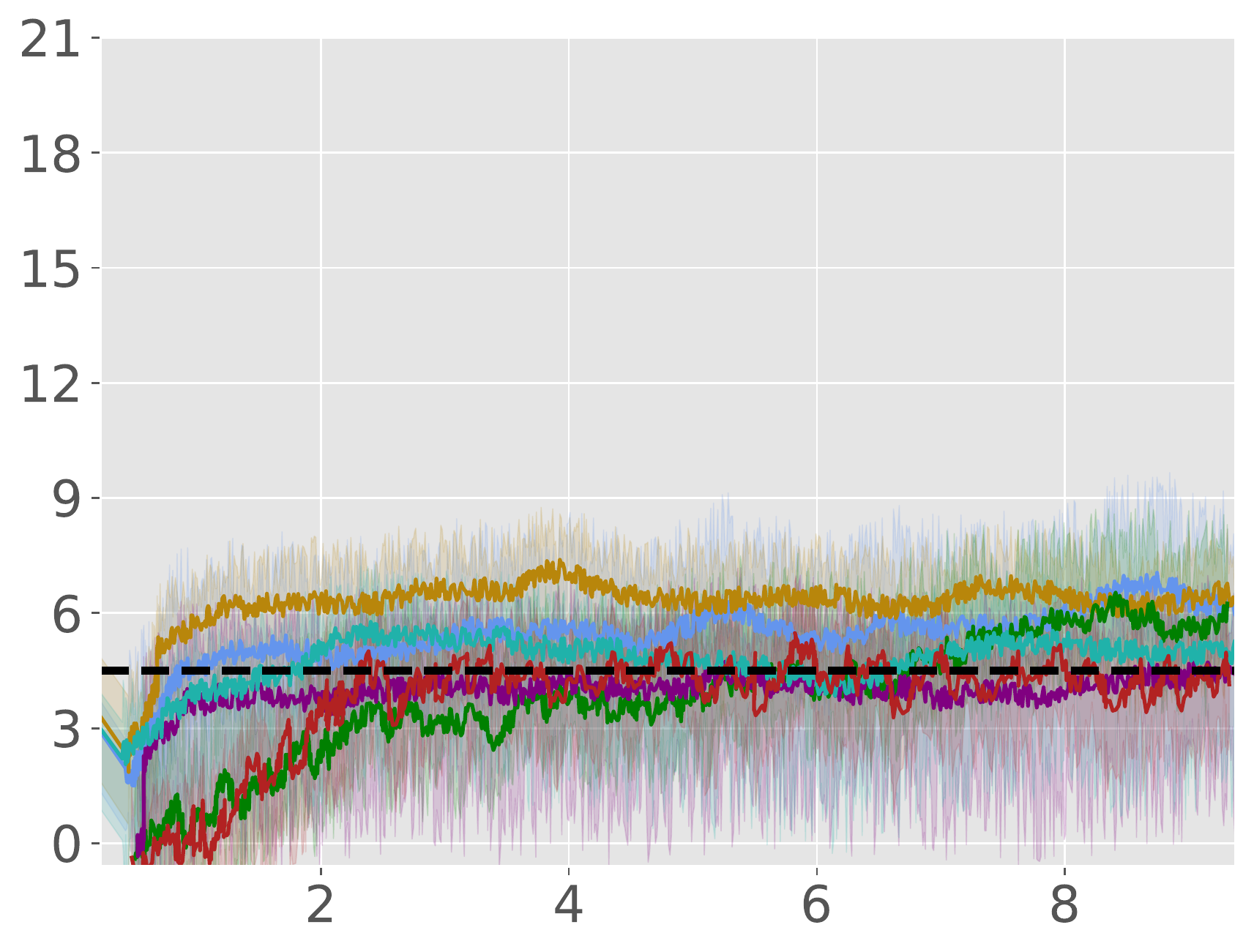}
        \label{fig:ant_circle_costs}
    }

    \caption{The average reward and constraint cost function values vs iterations (in $10^{4}$) learning curves for some algorithm-task pairs. Solid lines in each figure are the empirical means, while the shaded area represents 1 standard deviation, all over 5 runs. The dashed line in constraint plots is the constraint threshold $l$. ATRPO and PPO are tested with constraints, which are included in their Lagrangian formulation.}
    \label{fig:appendix_rewards_costs_comparison}
\end{figure*}

\subsubsection{Recovery Regime Revisited}

In Subsection \ref{subsec:recovery_regime}, we studied the effect of the hyperparameter $t$ for only one task. Figure \ref{fig:hyperparam_t_appendix} shows the performance of ACPO with different values of $t$ in various environments.

\begin{figure*}[ht]
    \hspace{0.25cm} Rewards: \newline
    {
        \includegraphics[width=0.18\textwidth]{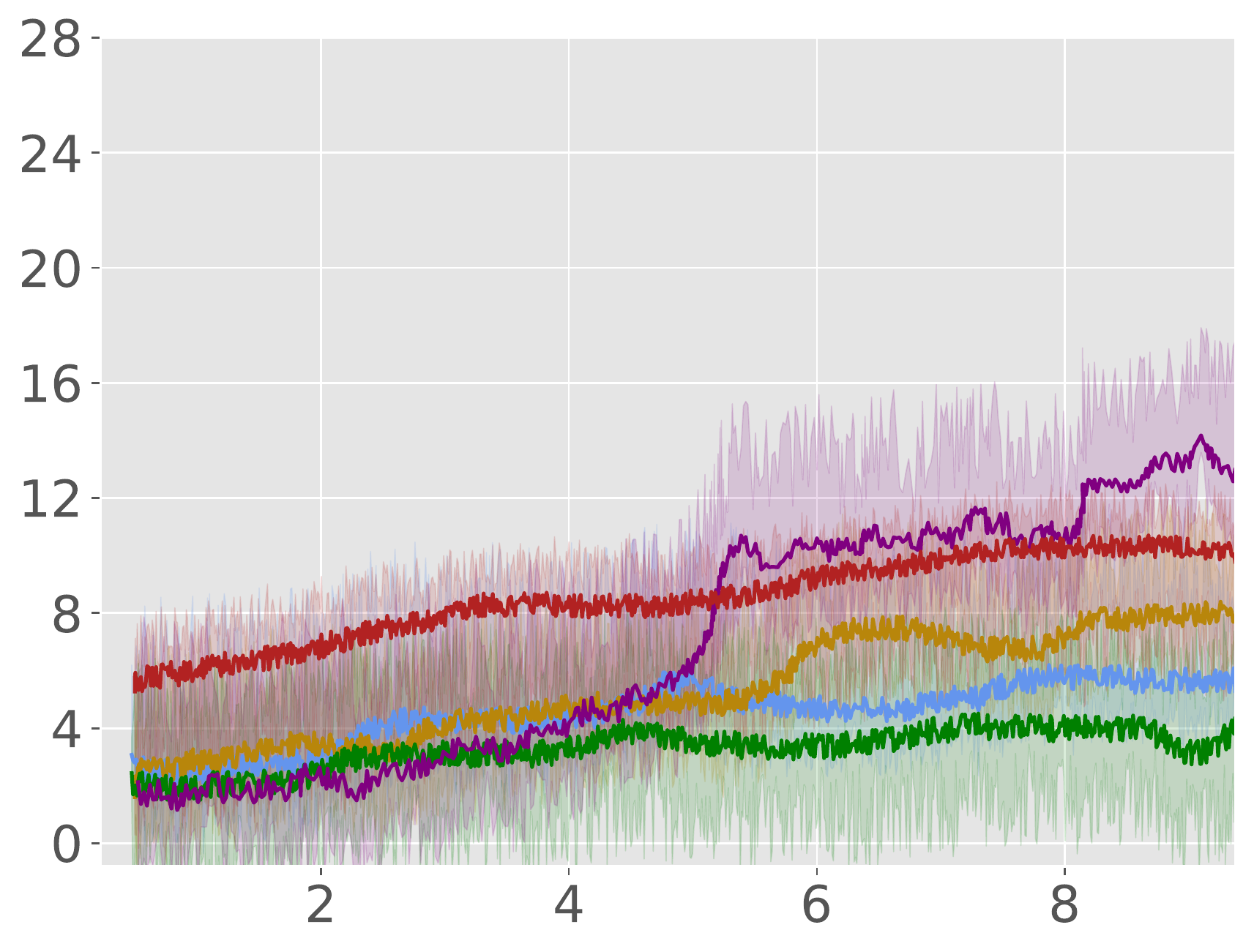}
        \label{fig:point_gather_hyper_rewards}
    }
    {
        \includegraphics[width=0.18\textwidth]{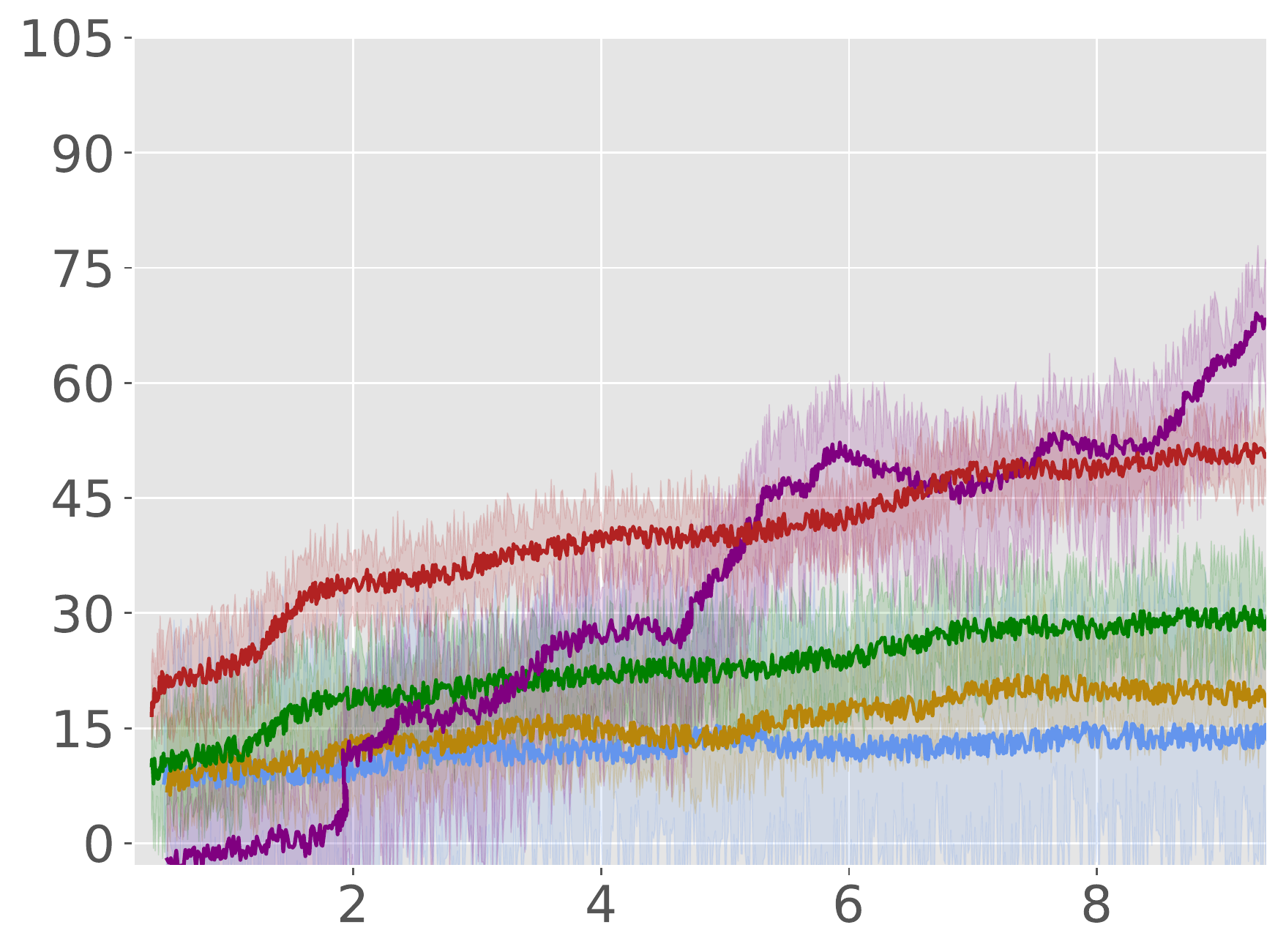}
        \label{fig:ant_circle_hyper_rewards}
    }
    {
        \includegraphics[width=0.185\textwidth]{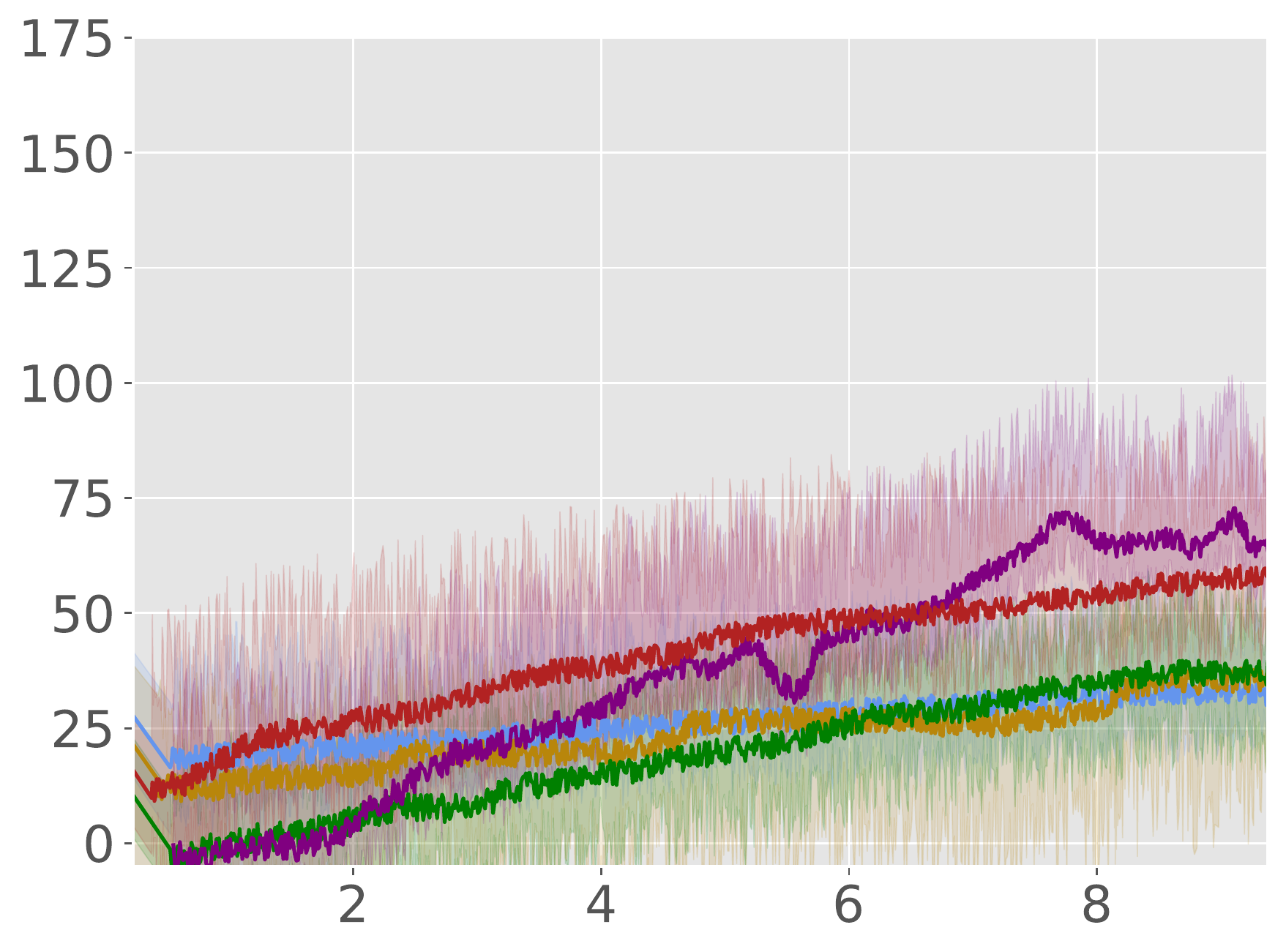}
        \label{fig:ant_gather_hyper_rewards}
    }
    {
        \includegraphics[width=0.185\textwidth]{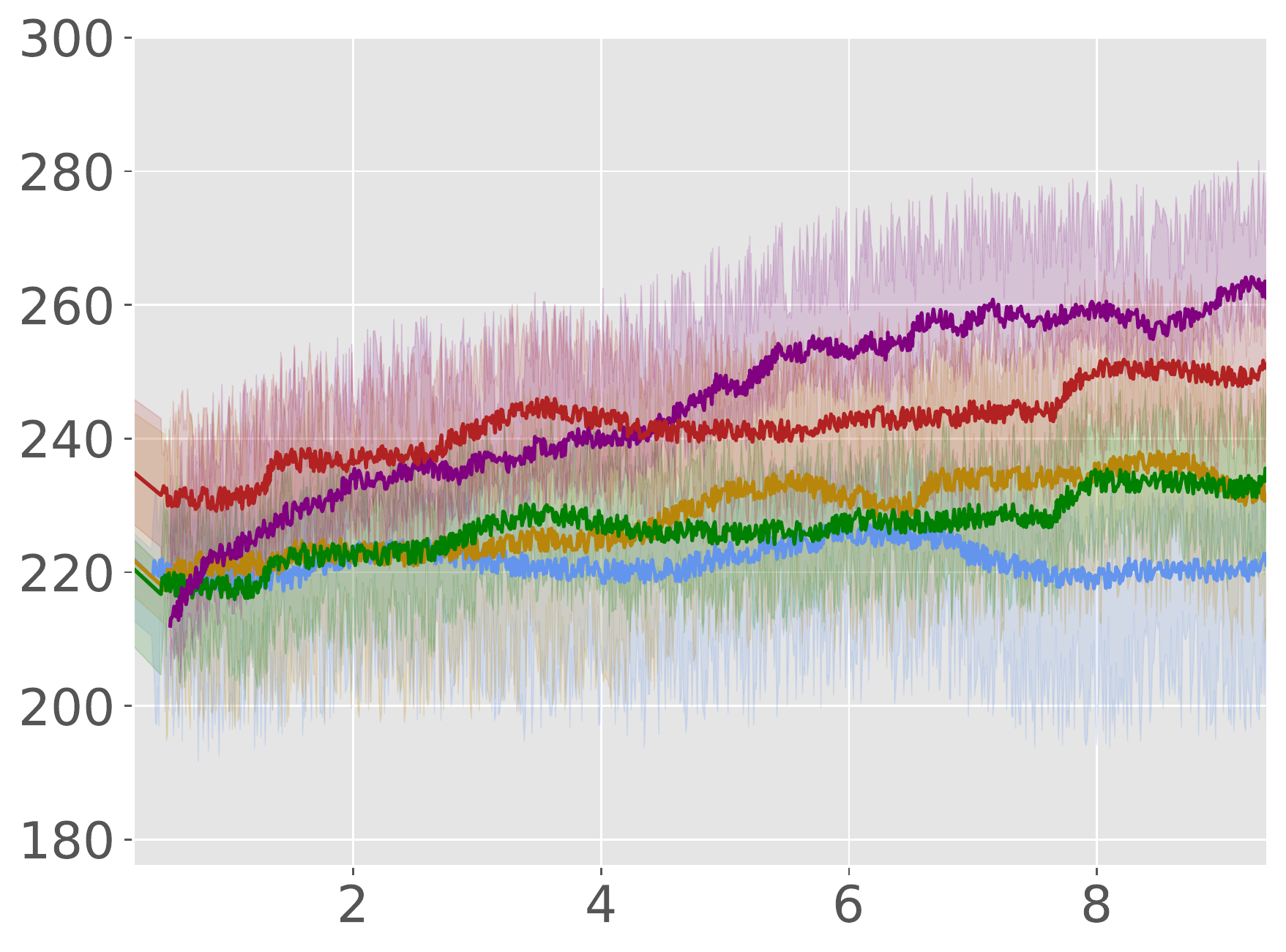}
        \label{fig:bottleneck_hyper_rewards}
    }
    {
        \includegraphics[width=0.185\textwidth]{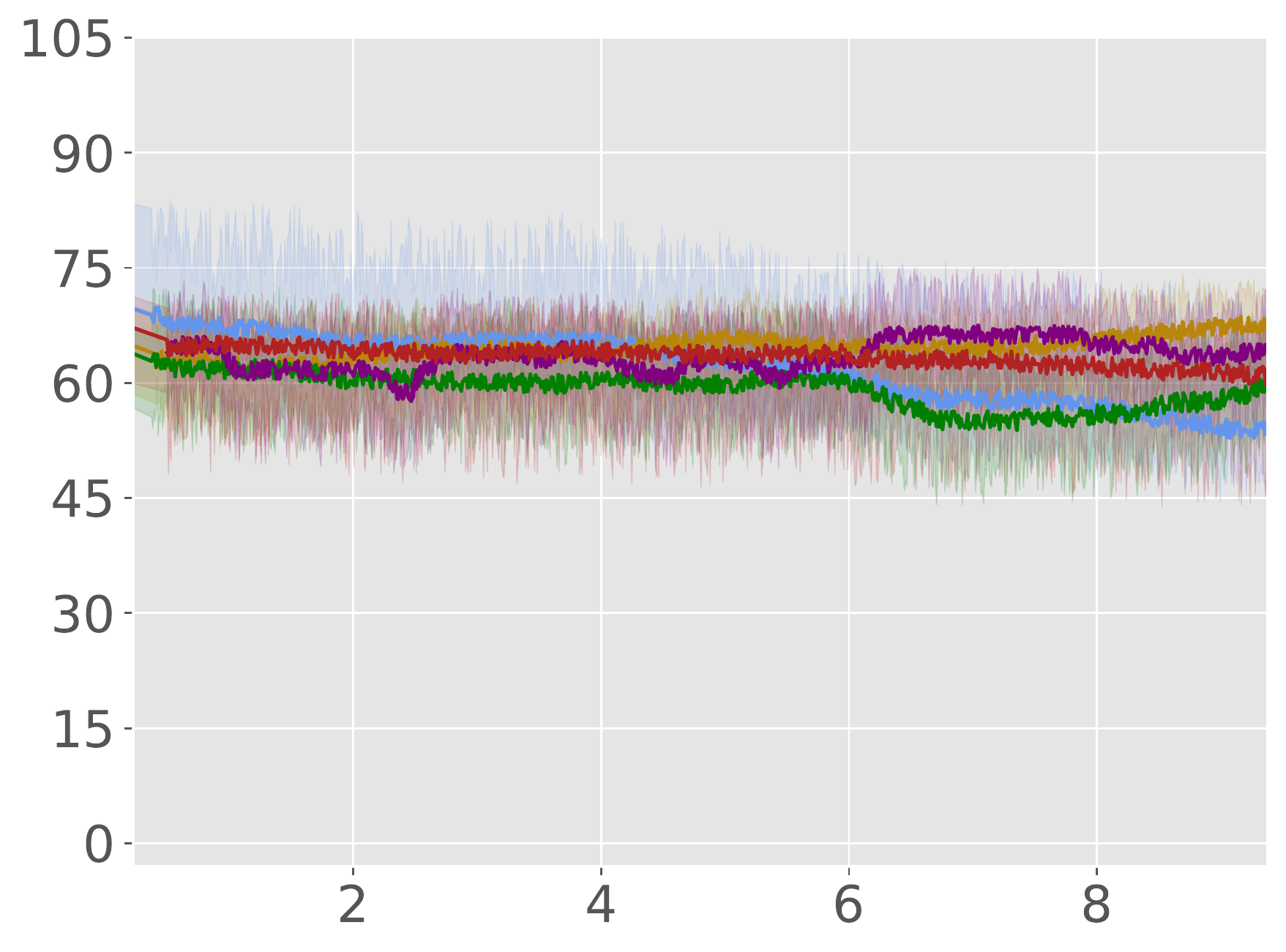}
        \label{fig:grid_hyper_rewards}
    }

    \rule{\linewidth}{0.5pt}
    \hspace{2cm} Constraint values: \hspace{4cm} \includegraphics[width=0.5\textwidth]{images/legend_t_hyperparam.JPG} \newline

    \subfloat[Point Gather]{
        \includegraphics[width=0.185\textwidth]{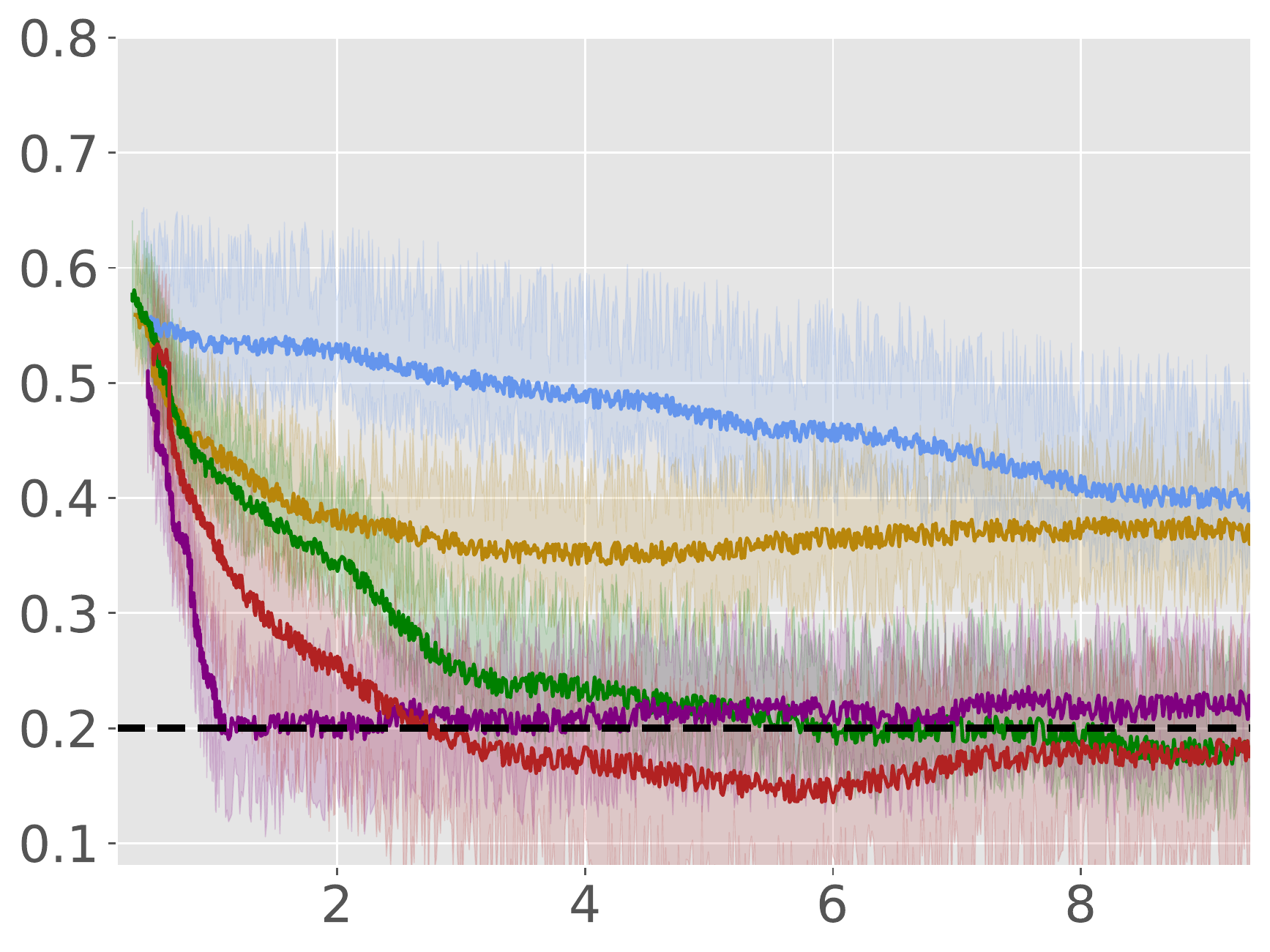}
        \label{fig:point_gather_hyper_costs}
    }
    \subfloat[Ant Circle]{
        \includegraphics[width=0.185\textwidth]{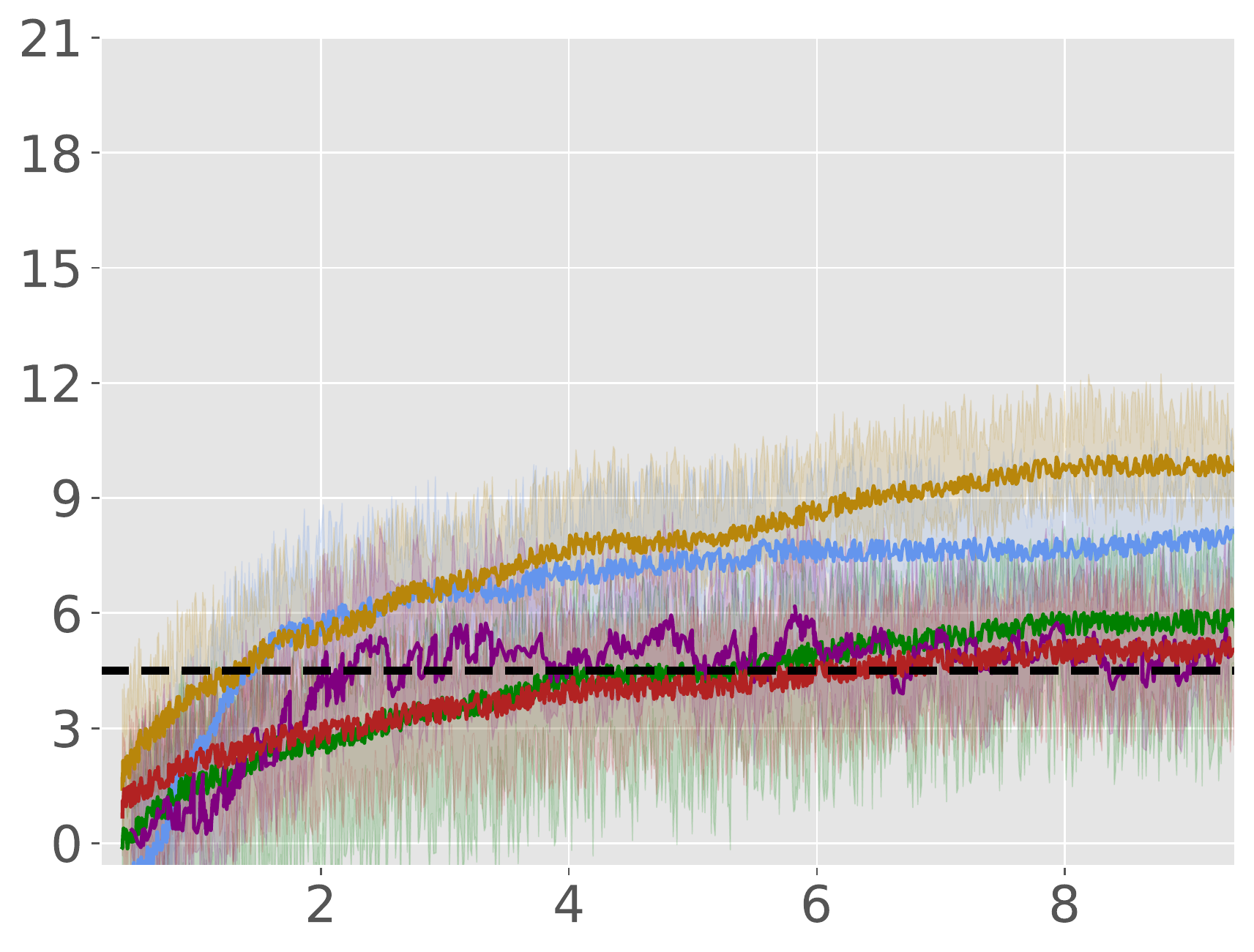}
        \label{fig:ant_circle_hyper_costs}
    }
    \subfloat[Ant Gather]{
        \includegraphics[width=0.185\textwidth]{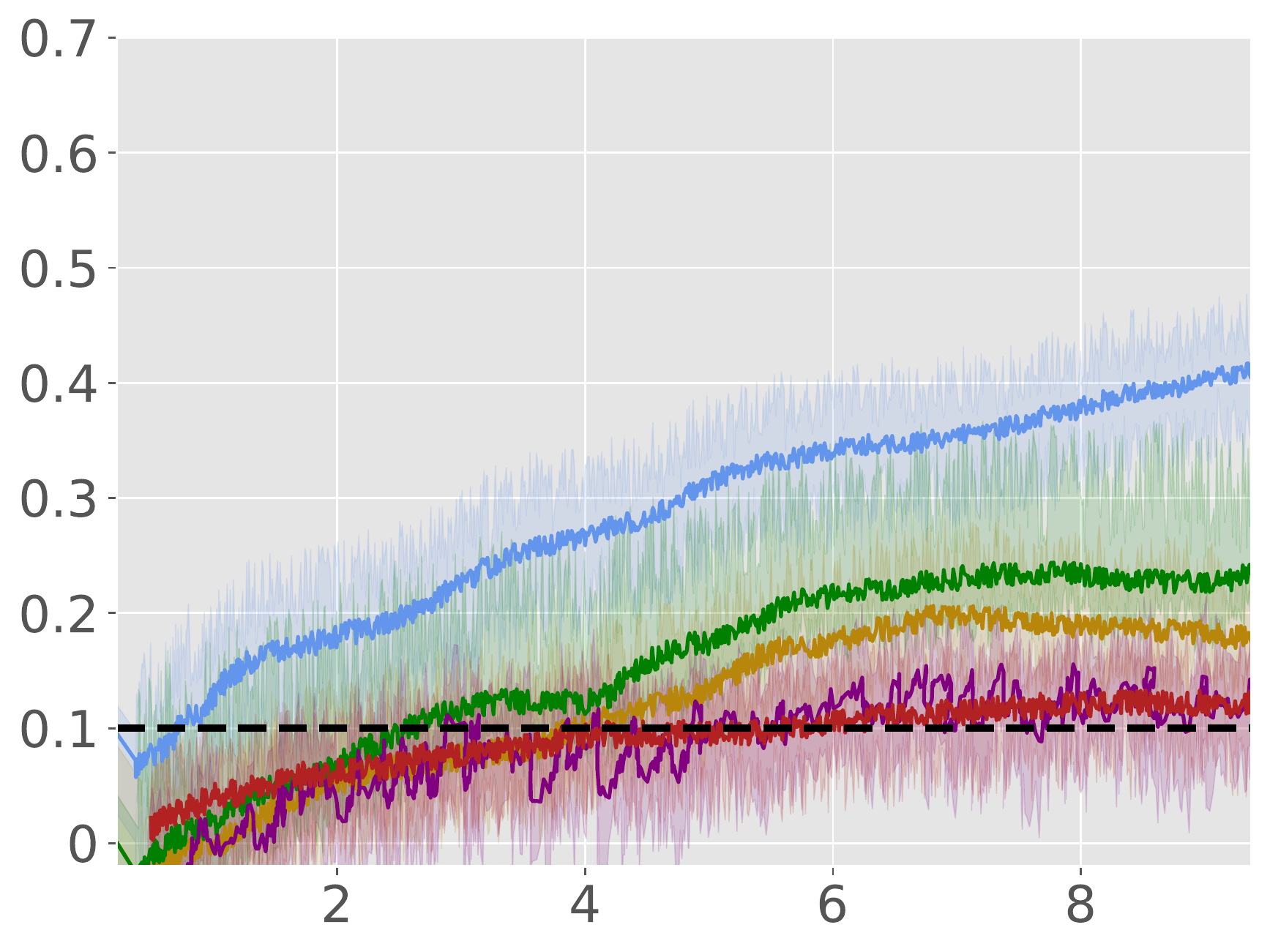}
        \label{fig:ant_gather_hyper_costs}
    }
    \subfloat[Bottleneck]{
        \includegraphics[width=0.185\textwidth]{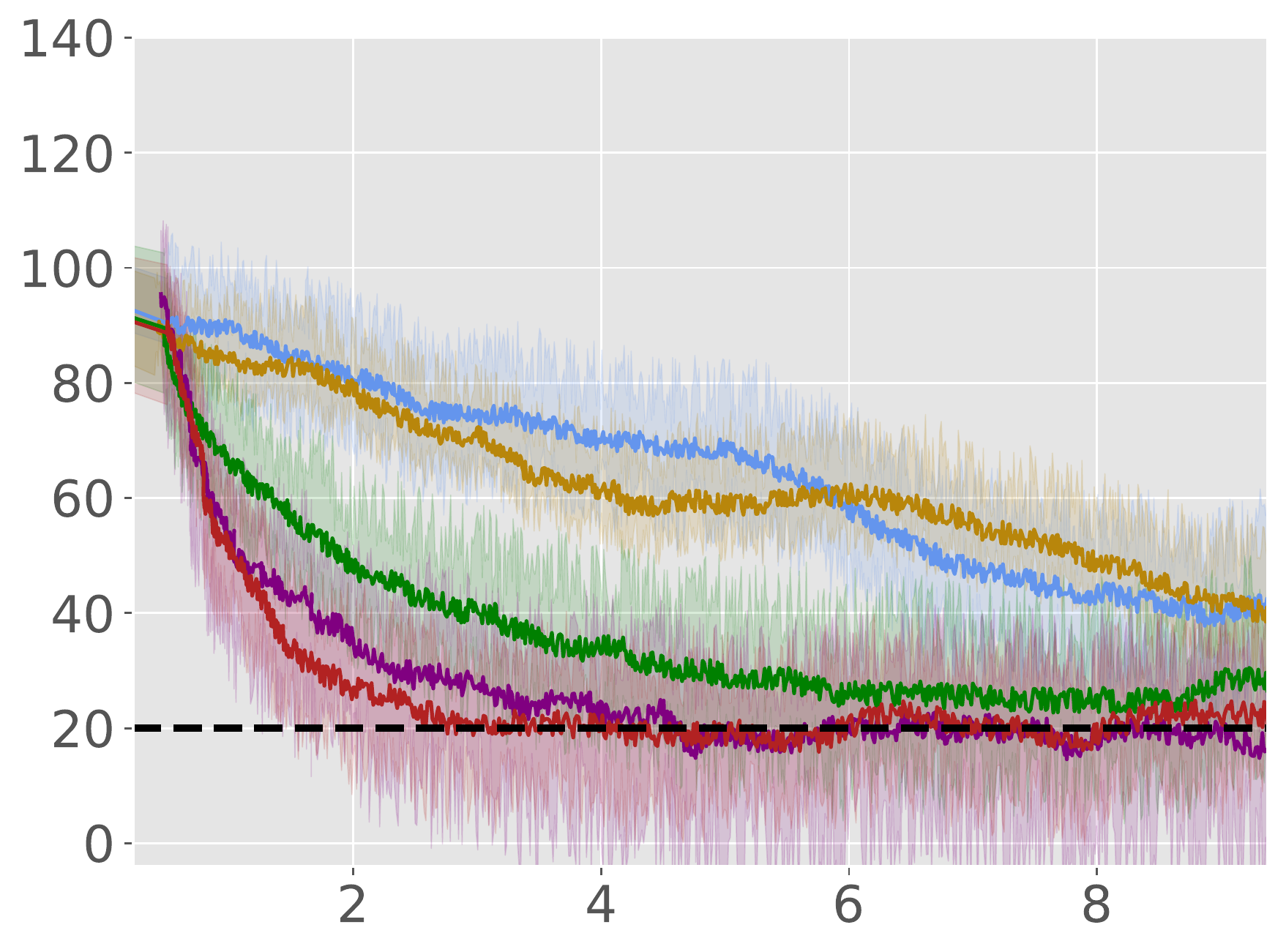}
        \label{fig:bottleneck_hyper_costs}
    }
    \subfloat[Grid]{
        \includegraphics[width=0.185\textwidth]{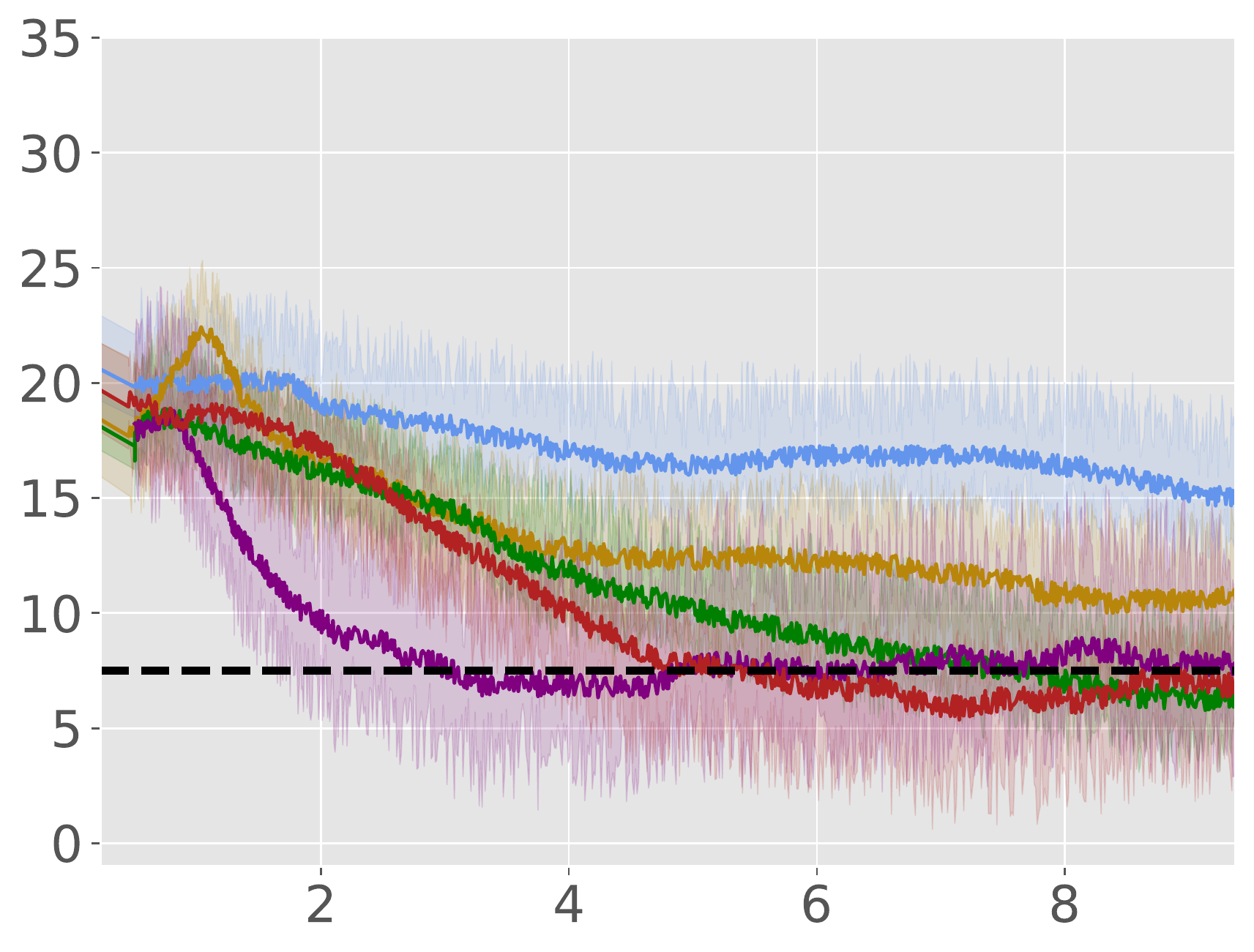}
        \label{fig:grid_hyper_costs}
    }
        \caption[]{Comparison of performance of ACPO with different values of the hyperparameter $t$ in various environment. X-axis is iterations in $10^{4}$.}
    \label{fig:hyperparam_t_appendix}
\end{figure*}


\end{document}